\documentclass[11pt]{article}

\usepackage[preprint]{acl}

\usepackage{times}
\usepackage{latexsym}

\usepackage[T1]{fontenc}

\usepackage[utf8]{inputenc}

\usepackage{microtype}

\usepackage{inconsolata}

\usepackage{graphicx}
\usepackage{subcaption}

\usepackage{amsmath,amssymb,amsthm}
\usepackage{algorithm}
\usepackage{algpseudocode}
\usepackage{cleveref}
\usepackage{booktabs}

\DeclareMathAlphabet\mathbfcal{OMS}{cmsy}{b}{n}
\newcommand{\ten}[1]{\mathbfcal{#1}}
\newcommand{\mat}[1]{\mathbf{#1}}

\theoremstyle{plain}
\newtheorem{theorem}{Theorem}
\newtheorem{proposition}{Proposition}
\newtheorem{lemma}{Lemma}

\theoremstyle{definition}
\newtheorem{assumption}{Assumption}

\theoremstyle{remark}
\newtheorem{remark}{Remark}

\theoremstyle{plain}
\newtheorem{definition}{Definition}

\crefname{assumption}{assumption}{assumptions}
\Crefname{assumption}{Assumption}{Assumptions}

\title{\textsc{t-Muon}: Tensorizing Muon via t-SVD for Large Language Model Pre-training}

\author{
  Dongyang Li, \quad
  Mingsong Yan, \quad
  Zheng Zhang, \quad
  Sui Tang \\
  University of California, Santa Barbara \\
  Santa Barbara, CA 93106, USA \\
  \texttt{\{dongyang\_li,mingsongyan,suitang\}@ucsb.edu} \\
  \texttt{zhengzhang@ece.ucsb.edu}
}

\begin{document}
\maketitle
\begin{abstract}
Muon has recently emerged as a strong optimizer for large language model pretraining by exploiting matrix structure through polar orthogonalization. However, it processes each parameter matrix independently, whereas recent tensor-aware extensions of Muon jointly update groups of eligible matrix parameters through matrix unfoldings. In contrast, we propose \textsc{t-Muon}, a t-SVD-based generalization that groups structurally related Transformer gradients into third-order tensors and defines the corresponding Muon-style update directly through the $\mat{M}$-transform. We show that the exact \textsc{t-Muon} update is a steepest-descent direction under the $\star_{\mat{M}}$-spectral norm and establish a convergence guarantee. Across GPT-2-style and LLaMA-style models from 51M to 1.3B parameters trained on FineWeb and FineWeb-Edu, head-wise \textsc{t-Muon} with the identity transform consistently achieves lower validation perplexity than AdamW, Muon, and variants using DFT, DCT, or DST transforms. These gains persist in additional experiments with longer context lengths and larger token budgets, although the preferred tensorization mode varies across settings.
\end{abstract}

\section{Introduction}
\label{sec:introduction}
Pretraining large language models (LLMs) on large-scale datasets requires substantial computational resources, making optimization efficiency an important component of practical training \citep{brown2020language,touvron2023llama}. AdamW is a standard baseline optimizer for LLM pretraining, yet its coordinate-wise adaptation does not exploit the matrix structure of individual parameter blocks \citep{kingma2015adam,loshchilov2019decoupled}. Muon offers a matrix-aware alternative for two-dimensional weight matrices by orthogonalizing the gradient matrices \citep{jordan2024muon}. Subsequent studies have demonstrated its competitive performance and training efficiency relative to AdamW across a range of LLM-pretraining settings \citep{liu2025muon,shah2025muon}.

Muon's application of the polar map to each matrix independently leaves open how parameter blocks should be organized and jointly orthogonalized. Recent methods investigate these choices from different perspectives. Group Muon studies attention-head grouping \citep{zhang2026groupmuon}, while MuonBP uses predominantly device-local orthogonalization to reduce communication overhead \citep{khaled2025muonbp}. Rather than partitioning individual matrices, a complementary line of work organizes related matrix-valued parameters into a higher-order tensor \citep{zhang2026teon} to improve optimization effectiveness and practical training efficiency. 

However, existing tensor-aware extensions of Muon apply matrix orthogonalization to tensor unfoldings \citep{zhang2026teon,bogachev2026tensorion} rather than constructing the update from a tensor singular value decomposition (t-SVD) . Moreover, existing work has focused either on naturally tensor-valued parameters \citep{bogachev2026tensorion,qing2026muonc} or on a limited set of constructed tensor groups \citep{zhang2026teon}. Other structurally related parameter groupings therefore remain unexplored from a unified t-SVD perspective.

In this work, we investigate a transform-based geometry for tensorized Transformer updates. We introduce \textsc{t-Muon}, a t-SVD-based extension of Muon that stacks structurally related matrix gradients into third-order tensors and constructs their updates using the $\mat{M}$-transform and t-SVD \citep{kilmer2011factorization,kernfeld2015tensor,kilmer2021tensor}. This formulation separates the choice of which gradients are grouped from the choice of how the resulting tensor slices interact, yielding a unified family of \textsc{t-Muon} variants.

\paragraph{Contributions.}
Our main contributions are as follows:
\begin{itemize}
    \item We introduce \textsc{t-Muon}, which defines Muon-style updates for tensorized gradients through $\mat{M}$-transform and t-SVD.

    \item We test multiple modes to tensorize Transformer gradients as third-order tensors, including Layer, query-key-value (QKV), Head, and Head-stack tensorizations.
    
    \item We show that the exact \textsc{t-Muon} update is a norm-constrained steepest-descent direction under the corresponding $\star_{\mat{M}}$-spectral norm. We further establish a nonasymptotic stationarity guarantee for an idealized stochastic tensor method with Nesterov-type momentum.

    \item We compare AdamW, Muon, and \textsc{t-Muon} variants across identity, DFT, DCT, and DST transforms in GPT-2-style and LLaMA-style language-model pretraining on FineWeb and FineWeb-Edu. We evaluate models from 51M to 1.3B parameters and further test the methods under longer context lengths and larger token budgets. We find that head-wise \textsc{t-Muon} with the identity transform generally performs best in the main comparisons.
 
\end{itemize}

\section{Background}
\label{sec:background}
\subsection{Muon Optimizer}
\label{sec:background-structured-optimizers}

By replacing matrix-valued momentum with an approximate polar factor, Muon \citep{jordan2024muon} is an optimizer that exploits the matrix structure of neural network parameters and has emerged as a competitive alternative to AdamW for language-model pretraining \citep{liu2025muon,shah2025muon}. Let $\mat{W}_t\in\mathbb{R}^{m\times n}$ be a matrix parameter with stochastic gradient $\mat{G}_t\in\mathbb{R}^{m\times n}$ and momentum $\mat B_t\in\mathbb{R}^{m\times n}$. The Muon update is
\begin{align}
    \mat B_t &= \mu\mat B_{t-1} + (1-\mu)\mat G_t \nonumber\\
    \mat O_t &=\operatorname{Orth}(\mat{B}_t)\label{eq:bg-muon-update}\\
    \mat{W}_{t+1} &= \mat{W}_{t} - \eta_t\cdot\sqrt{m/n}\cdot \mat O_t\nonumber,  
\end{align}
where $\operatorname{Orth}(\cdot)$ orthogonalizes the matrix input. Specifically, if $\mat{B}_t=\mat{U}_t\mat{\Sigma}_t\mat{V}_t^{\top}$ is the thin SVD, then $\operatorname{Orth}(\mat B_t) = \mat{U}_t \mat{V}_t^{\top}$.

Practical Muon implementations replace the exact polar map with a small number of Newton--Schulz or related polynomial iterations \citep{higham2008functions,amsel2025polar,grishina2025accelerating}. Subsequent work has extended this template in several directions, including more efficient or accurate polar approximations \citep{amsel2025polar,grishina2025accelerating}, statistical calibration and structural balancing around the polar step \citep{si2025adamuon,chang2026muoneq}, alternative spectral targets for the update \citep{pang2026htmuon,dolatabadi2026numuon}, and different choices of orthogonalization domain or parameter geometry \citep{zhang2026groupmuon,khaled2025muonbp,li2026intrinsic}. 

An important line of work extends Muon by changing the parameter domain over which orthogonalization is applied. TEON stacks gradients of the same projection type from adjacent Transformer layers into a third-order tensor \citep{zhang2026teon}, while Tensorion targets parameters with native tensor structure, such as convolutional kernels and patch-embedding weights \citep{bogachev2026tensorion}. Together, these methods show that the choice of orthogonalization domain can affect optimization. However, these formulations couple tensor components according to a chosen matricization. In this paper, we instead use an orthogonal \(\mat M\)-transform to explicitly mix grouped matrices, consider several ways to group Transformer gradients into tensors, and define the update directly using the \(\mat M\)-product and t-SVD.

Concurrent with our work, Muon-C \citep{qing2026muonc} extends Muon to convolutional kernels. It applies a spatial discrete Fourier transform (DFT) to the kernel and then independently polarizes the resulting frequency-wise channel-transfer matrices. This design shares the structure of \textsc{t-Muon}, which applies separate polar maps to transformed frontal slices. However, Muon-C is specialized to the Fourier geometry induced by the translation equivariance of convolution. In contrast, \textsc{t-Muon} uses the $\mat{M}$-product and t-SVD to explore a general transform-based tensor formulation, in which both the tensorization and the transform are design choices. The two methods also target different models. Muon-C optimizes the native kernels of convolutional vision models, whereas \textsc{t-Muon} constructs tensors from structurally related matrices of Transformer language models.

\subsection{Tensors and t-SVD}
\label{sec:background-tsvd}

We introduce the tensor notation used throughout the paper. For a third-order tensor $\ten{X}\in\mathbb{R}^{m\times n\times K}$, the scalar $x_{ijk}$ denotes its $(i,j,k)$-th entry, and the matrix $\mat X_k:=\ten{X}_{:,:,k}\in\mathbb{R}^{m\times n}$ denotes its $k$-th frontal slice. For equal-shaped tensors $\ten{X},\ten{Y}\in\mathbb{R}^{m\times n\times K}$, the Frobenius inner product and its norm are
\begin{align}
    \langle\ten{X},\ten{Y}\rangle
    &:=
    \sum_{i=1}^{m}
    \sum_{j=1}^{n}
    \sum_{k=1}^{K}
    x_{ijk}y_{ijk}, \nonumber
    \\
    \|\ten{X}\|_{\mathrm{F}}
    &:=
    \sqrt{\langle\ten{X},\ten{X}\rangle}. \nonumber
\end{align}
We follow the transform-based tensor algebra of \citet{kilmer2011factorization,kernfeld2015tensor,kilmer2021tensor}.

\begin{definition}[$\mat{M}$-Transform]
\label{def:m-transform}
Let $\ten{X}\in\mathbb{R}^{m\times n\times K}$ and let $\mat{M}\in\mathbb{R}^{K\times K}$ be orthogonal. The $\mat{M}$-transform of $\ten{X}$ is
\begin{equation}
    \widehat{\ten{X}}
    :=
    \ten{X}\times_3\mat{M},
    \label{eq:bg-m-transform}
\end{equation}
whose frontal slices satisfy
\begin{equation}
 \widehat{\mat{X}}_k
    =
    \sum_{l=1}^{K}
    M_{kl}\mat X_l,
    \qquad k \in[K],
    \label{eq:bg-m-transform-slices}
\end{equation}
where $[K]:=\{1,\ldots,K\}$. If $\mat{M}$ is orthogonal, the inverse transform is
\begin{equation}
    \ten{X}
    =
    \widehat{\ten{X}}\times_3\mat{M}^{\top}.
    \label{eq:bg-inverse-m-transform}
\end{equation}
\end{definition}

\begin{definition}[$\mat{M}$-Product]
\label{def:m-product}
Let $\ten{X}\in\mathbb{R}^{m\times r\times K}$ and $\ten{Y}\in\mathbb{R}^{r\times n\times K}$. Their $\mat{M}$-product,
\begin{equation}
    \ten{Z}
    :=
    \ten{X}\star_{\mat{M}}\ten{Y}
    \in\mathbb{R}^{m\times n\times K},
    \label{eq:bg-m-product}
\end{equation}
is defined in the transformed domain by
\begin{equation*}
   \widehat{\mat{Z}}_k
    :=
    \widehat{\mat{X}}_k
    \widehat{\mat{Y}}_k,
    \qquad k\in[K],
\end{equation*}
followed by the inverse transform $\ten{Z} =\widehat{\ten{Z}}\times_3\mat{M}^{\top}.$
\end{definition}

\begin{definition}[t-SVD under the $\mat{M}$-Product]
\label{def:tsvd}
Let $\ten{X}\in\mathbb{R}^{m\times n\times K}$. A t-SVD \citep{kilmer2011factorization,kernfeld2015tensor,kilmer2021tensor} of
$\ten{X}$ under the $\mat{M}$-product is a factorization
\begin{equation}
    \ten{X}
    =
    \ten{U}
    \star_{\mat{M}}
    \ten{S}
    \star_{\mat{M}}
    \ten{V}^{\top},
    \label{eq:bg-tsvd}
\end{equation} where $ \ten{U} \in\mathbb{R}^{m\times m\times K},\ten{S} \in\mathbb{R}^{m\times n\times K}$, $\ten{V} \in\mathbb{R}^{n\times n\times K}$, and $\ten V^\top$ is defined as transpose of frontal slices. In the transformed domain, for every $k\in[K]$, \begin{equation*}
    \widehat{\mat{X}}_k = \widehat{\mat{U}}_k \widehat{\mat{S}}_k \widehat{\mat{V}}_k^\top,
\end{equation*} is a full SVD, and hence $\ten X =  \widehat{\ten X}\times_3 \mat M^\top$.
\end{definition}

\begin{remark}[Unitary DFT Transforms]
For the unitary Discrete Fourier Transform, complex SVD is applied, and it requires the complex transform-domain adjoint. Details are provided in \Cref{app:dft-extension}.
\end{remark}

\begin{remark}[Full SVD]
In \Cref{def:tsvd}, $\widehat{\mat{X}}_k = \widehat{\mat{U}}_k \widehat{\mat{S}}_k \widehat{\mat{V}}_k^\top$ is the full SVD, and we define $\operatorname{Orth}(\widehat{\mat{X}}_k) = \left(\widehat{\mat{U}}_k\right)_{:,:r_k} \left(\widehat{\mat{V}}_k^\top\right)_{:r_k,:}$, where $r_k=\operatorname{rank}(\widehat{\mat{X}}_k)$. For simplicity, throughout the remainder of the paper, we write $\operatorname{Orth}(\widehat{\mat{X}}_k) = \widehat{\mat{U}}_k\widehat{\mat{V}}_k^\top$.
\end{remark}

\subsection{Norm-Constrained Steepest Descent}
\label{sec:background-steepest-descent}

Muon can be interpreted as norm-constrained steepest descent \citep{bernstein2024oldoptimizer}. Let $\mat{W}_t\in\mathbb{R}^{m\times n}$ be a matrix parameter and let $\mat{G}_t=\nabla_{\mat{W}}\mathcal{L}(\mat{W}_t)$ be its gradient. For a step size $\eta_t>0$, the spectral-norm-constrained update solves
\begin{equation}
    \Delta\mat{W}_t^{\star}
    \in
    \operatorname*{arg\,min}_{
        \|\Delta\mat{W}\|_{\mathrm{spec}}\leq\eta_t
    }
    \left\langle
        \mat{G}_t,
        \Delta\mat{W}
    \right\rangle .
    \label{eq:bg-muon-steepest-descent}
\end{equation}

By spectral--nuclear norm duality, the optimization problem in \Cref{eq:bg-muon-steepest-descent} can be solved by the matrix polar factor. The same framework extends to tensor-valued parameters by choosing an appropriate norm on the tensor space. TEON uses the matrix spectral norm of a selected tensor unfolding \citep{zhang2026teon}, whereas Tensorion introduces a relaxed tensor norm whose dual is defined through the nuclear norms of candidate unfoldings \citep{bogachev2026tensorion}. In \Cref{sec:steepest-descent}, we instead define a transform-domain tensor spectral norm that yields the exact \textsc{t-Muon} update.

\section{\textsc{t-Muon}}
\label{sec:methodology}

\subsection{\textsc{t-Muon} Updates}
\label{sec:tsvd-muon-update}

Now, we introduce \textsc{t-Muon}, a tensor-aware optimizer that orthogonalizes tensor parameters by utilizing the M-transform and t-SVD. The algorithm is presented in \Cref{alg:tsvd-muon}.

Given a tensorized or tensor parameter $\ten{W}_t\in \mathbb{R}^{m \times n \times K}$ and its momentum direction $\ten{B}_t  \in \mathbb{R}^{m \times n \times K}$, \textsc{t-Muon} first applies the M-transform with a pre-selected real orthogonal $\mat{M}\in\mathbb{R}^{K\times K}$, $\widehat{\ten{B}_t} = \ten{B}_t\times_3\mat{M}.$ 

Then, \textsc{t-Muon} applies the orthogonalization to each transformed slice $(\widehat{\mat{B}_t})_k$,  $$(\widehat{ \mat{D}_t})_k=\operatorname{Orth}\left((\widehat{\mat{B}_t})_k\right).$$

Finally, an inverse M-transform is applied, $ \ten D_t = \widehat{\ten D_t}\times_3 \mat{M}^\top,$ and the actual update is $$\ten W_{t+1} = \ten W_t - \eta\cdot \ten D_t.$$ 

If $\ten {W}_t$ is organized from a matrix-valued parameter $\mat W$, the update of $\mat W$ is multiplied by an extra scaling factor $\sqrt\frac{\mathrm{fan}_{\mathrm{out}}(\mat W)} {\mathrm{fan}_{\mathrm{in}}(\mat W)}$. It is determined by the original shape of the matrix parameter and therefore need not coincide with the tensor dimensions \(m\) and \(n\). 

When \(\mat M\) is the DFT, the inverse transform is real in exact arithmetic for real input tensors. In floating-point arithmetic, however, it may contain tiny imaginary roundoff errors, therefore we take the real part to enforce a real-valued update. For real orthogonal transforms, this step has no effect.

Depending on the tensorization, the identity-transform case may coincide with either full-matrix or blockwise Muon. Compared to TEON and Tensorion \citep{zhang2026teon,bogachev2026tensorion} that apply matrix orthogonalization to tensor unfolding, \textsc{t-Muon} applies separate polar maps to the transformed frontal slices and therefore induces a distinct update geometry.

\begin{algorithm}[t]
\caption{\textsc{t-Muon} for One Tensorized Parameter Group}
\label{alg:tsvd-muon}
\begin{algorithmic}[1]
\Require Initial tensorized or tensor parameter $\ten{W}_{0}\in\mathbb{R}^{m\times n\times K}$
\Require Mini-batch losses $\{\mathcal{L}_{t}\}_{t=0}^{T_{\mathrm{iter}}-1}$, learning-rate schedule $\{\eta_t\}_{t=0}^{T_{\mathrm{iter}}-1}$, weight decay $\lambda\geq 0$, and momentum $\mu\in[0,1)$
\Require Real orthogonal transform $\mat{M}\in\mathbb{R}^{K\times K}$ or unitary DFT transform $\mat{M}\in\mathbb{C}^{K\times K}$, and number of polar iterations $T_{\mathrm{pol}}$
\Ensure Updated tensorized parameter  $\ten{W}_{T_{\mathrm{iter}}}$

\State $\ten{B}_{-1}\gets \mathbf{0}$

\For{$t=0,\ldots,T_{\mathrm{iter}}-1$}

    \State
    $\ten{G}_{t}
    \gets
    \nabla_{\ten{W}}\mathcal{L}_{t}(\ten{W}_{t})$

    \State
    $\ten{B}_{t}
    \gets
    \mu\ten{B}_{t-1}+\ten{G}_{t}$

    \State
    $\ten{Y}_{t}
    \gets
    \ten{G}_{t}+\mu\ten{B}_{t}$

    \State
    $\widehat{\ten{Y}}_{t}
    \gets
    \ten{Y}_{t}\times_{3}\mat{M}$

    \State
    $\widehat{\ten{D}}_{t}\gets \mathbf{0}$

    \For{$k=1,\ldots,K$}

        \State
        $\left(\widehat{\mat{D}}_{t}\right)_k
        \gets
        \operatorname{Orth}_{T_{\mathrm{pol}}}
        \left(
        \left(\widehat{\mat{Y}}_{t}\right)_k
        \right)$

    \EndFor

    \State
    $\ten{D}_{t}
    \gets
    \operatorname{Re}
    \left(
    \widehat{\ten{D}}_{t}
    \times_{3}
    \mat{M}^{*}
    \right)$

    \State
    $\ten{W}_{t+1}
    \gets
    (1-\eta_t\lambda)\ten{W}_{t}
    -
    \eta_t\ten{D}_{t}$

\EndFor

\State \Return $\ten{W}_{T_{\mathrm{iter}}}$

\end{algorithmic}
\end{algorithm}

\subsection{Tensorization Mode}
\label{sec:tensorization}

As discussed in \Cref{sec:background-structured-optimizers}, existing structured extensions of Muon redefine the parameter domain through blockwise partitioning or higher-order tensorization. We focus on tensorization and consider four main Transformer-specific tensorization modes.

We form a third-order tensor by stacking equal-shaped, structurally related matrix parameters as frontal slices. The third mode indexes adjacent layers, query--key--value projections, or attention heads. Let $\mat{G}_{i}^{(\ell)} \in \mathbb{R}^{m \times n}$ denote the gradient matrix associated with projection $i \in \{Q,K,V\}$ in layer $\ell$. 

\begin{itemize}
    \item \textbf{Layer mode.}
Following TEON \citep{zhang2026teon}, we stack the same projection
from two adjacent Transformer layers as
$\ten{G}_{i,\mathrm{layer}}^{(\ell-1,\ell)}
\in \mathbb{R}^{m \times n \times 2}$ with
\[
\begin{aligned}
    \left(\mat{G}_{i,\mathrm{layer}}^{(\ell-1,\ell)}\right)_1
    &= \mat{G}_{i}^{(\ell-1)}, \\
    \left(\mat{G}_{i,\mathrm{layer}}^{(\ell-1,\ell)}\right)_2
    &= \mat{G}_{i}^{(\ell)}.
\end{aligned}
\]
    \item \textbf{QKV mode.}
    We stack the query, key, and value projection gradients from the same Transformer layer as $\ten{G}_{\mathrm{QKV}}^{(\ell)} \in \mathbb{R}^{d \times d \times 3}$: $$\begin{aligned}
        \left(
        \mat{G}_{\mathrm{QKV}}^{(\ell)}
        \right)_1
        &= \mat{G}_{Q}^{(\ell)}, \\
        \left(
        \mat{G}_{\mathrm{QKV}}^{(\ell)}
        \right)_2
        &= \mat{G}_{K}^{(\ell)}, \\
        \left(
        \mat{G}_{\mathrm{QKV}}^{(\ell)}
        \right)_3
        &= \mat{G}_{V}^{(\ell)} .
    \end{aligned}$$
    \item \textbf{Head mode.} 
    We partition an attention-projection gradient matrix into head-specific row blocks and stack these blocks along the third mode as $\ten{G}_{i,\mathrm{head}}^{(\ell)} \in \mathbb{R}^{d_H \times d \times H}$ with
        $$\begin{aligned}
        \left(
        \mat{G}_{i,\mathrm{head}}^{(\ell)}
        \right)_h
        &= \mat{G}_{i,h}^{(\ell)},
        \qquad h \in [H],
    \end{aligned}$$where $\mat{G}_{i,h}^{(\ell)}
        \in \mathbb{R}^{d_H \times d}$ and $d_H = d/H$. Under the identity transform, this mode orthogonalizes each head block separately as studied by \citet{zhang2026groupmuon}. Our contribution is the transform-based tensor formulation, analysis, and evaluation—not head-wise orthogonalization itself.
    \item \textbf{Head-stack mode.} We stack the head blocks of the query, key, and value projections in the same layer as $\ten{G}_{\mathrm{head}}^{(\ell)}\in\mathbb{R}^{d_H \times d \times 3H}$. 
\end{itemize}

\subsection{Steepest-Descent under $\star_{\mat{M}}$-Spectral Norm }
\label{sec:steepest-descent}

Now, we show that the exact \textsc{t-Muon} update follows the steepest-descent framework. First, we define the $\star_{\mat{M}}$-spectral norm, which is the largest matrix spectral norm among the transformed frontal slices.
\begin{definition}[$\star_{\mat{M}}$-Spectral Norm]
\label{def:star-m-spectral-norm}
Let $\ten{X}\in\mathbb{R}^{m\times n\times K}$ and let $\mat{M}\in\mathbb{R}^{K\times K}$ be an orthogonal transform. Define the transformed tensor
\begin{equation*}
    \widehat{\ten{X}}
    :=
    \ten{X}\times_{3}\mat{M}.
\end{equation*}
The $\star_{\mat{M}}$-spectral norm of $\ten{X}$ is defined as
\begin{equation}
    \|\ten{X}\|_{\star_{\mat{M}}} := \max_{k\in[K]} \left\| \widehat{\mat{X}}_k \right\|_{\mathrm{spec}},
    \label{eq:star-m-spectral-norm}
\end{equation}
where $\|\cdot\|_{\mathrm{spec}}$ denotes the matrix spectral norm.
\end{definition} 

For a step size $\eta>0$, the steepest descent under $\|\cdot\|_{\star_{\mat{M}}}$ therefore selects
\begin{equation}
    \Delta\ten{W}^{\star}
    \in
    \operatorname*{arg\,min}_{
        \|\Delta\ten{W}\|_{\star_{\mat{M}}}\leq \eta
    }
    \left\langle
        \ten{G},
        \Delta\ten{W}
    \right\rangle .
    \label{eq:star-m-steepest-descent}
\end{equation}

The solution to \Cref{eq:star-m-steepest-descent} is obtained by applying the matrix polar map independently to the transformed frontal slices.

\begin{definition}[$\mat{M}$-Polar Map]
\label{def:m-polar-map}
Let $\mat{M}\in\mathbb{R}^{K\times K}$ be orthogonal, and let $\ten{G}\in\mathbb{R}^{m\times n\times K}$. Define its $\mat{M}$-transform as \(\widehat{\ten{G}}:= \ten{G}\times_3\mat{M}\). For each $k\in[K]$, let
\begin{equation*}
    \widehat{\mat{G}}_k
    =
    \mat{U}_k
    \mat{\Sigma}_k
    \mat{V}_k^{\top}
\end{equation*}
be the corresponding thin matrix SVD. Define $\widehat{\ten{O}}\in\mathbb{R}^{m\times n\times K}$ by
\begin{equation}
    \widehat{\mat{O}}_k
    :=
    \operatorname{Orth}
    \left(
        \widehat{\mat{G}}_k
    \right) =
    \mat{U}_k\mat{V}_k^{\top},
    \qquad k\in[K].
    \label{eq:m-polar-transformed-slices}
\end{equation}
The $\mat{M}$-polar map of $\ten{G}$ is then defined as
\begin{equation}
    \operatorname{Orth}_{\mat{M}}(\ten{G})
    :=
    \widehat{\ten{O}}\times_3\mat{M}^{\top}.
    \label{eq:m-polar-map}
\end{equation}
\end{definition}

\begin{proposition}[\textsc{t-Muon} as Steepest Descent]
\label{prop:star-m-steepest-descent}
Let $\mat{M}\in\mathbb{R}^{K\times K}$ be orthogonal and let $\ten{G}\in\mathbb{R}^{m\times n\times K}$. Then, a solution to the $\star_{\mat{M}}$-spectral-norm-constrained steepest-descent problem in \Cref{eq:star-m-steepest-descent} is
\begin{equation}
    \Delta\ten{W}^{\star}
    =
    -\eta\cdot \operatorname{Orth}_{\mat{M}}(\ten{G}).
    \label{eq:star-m-direction}
\end{equation}
\end{proposition}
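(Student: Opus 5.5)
The plan is to move the whole problem into the transformed domain, where it splits into $K$ independent matrix problems, and then invoke spectral--nuclear duality slice by slice. The first step is to record that the mode-3 product with an orthogonal $\mat{M}$ is a Frobenius isometry: because $\widehat{\ten{X}} = \ten{X}\times_3\mat{M}$ acts on every mode-3 fiber by the orthogonal matrix $\mat{M}$, we get
\begin{equation*}
    \langle \ten{G},\Delta\ten{W}\rangle
    =
    \langle \widehat{\ten{G}},\widehat{\Delta\ten{W}}\rangle
    =
    \sum_{k=1}^{K}\langle \widehat{\mat{G}}_k,\widehat{\Delta\mat{W}}_k\rangle .
\end{equation*}
Concretely, I will write the inner product fiberwise as $\sum_{i,j} g_{ij:}^{\top}\, w_{ij:}$ and use $\mat{M}^{\top}\mat{M}=\mat{I}$. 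By \Cref{def:star-m-spectral-norm}, the constraint $\|\Delta\ten{W}\|_{\star_{\mat{M}}}\le\eta$ is exactly the condition $\|\widehat{\Delta\mat{W}}_k\|_{\mathrm{spec}}\le\eta$ for every $k$. The map $\Delta\ten{W}\mapsto\widehat{\Delta\ten{W}}$ is a bijection with inverse $\times_3\mat{M}^{\top}$ (\Cref{eq:bg-inverse-m-transform}). So \Cref{eq:star-m-steepest-descent} is equivalent to minimizing $\sum_k\langle\widehat{\mat{G}}_k,\mat{Z}_k\rangle$ over independent matrices $\mat{Z}_k$ with $\|\mat{Z}_k\|_{\mathrm{spec}}\le\eta$. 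Since both the objective and the constraint set separate, the problem decouples into $K$ matrix problems.

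The second step solves each slice problem. For a fixed $k$, von Neumann's trace inequality (equivalently, duality between the spectral and nuclear norms) gives $\langle\widehat{\mat{G}}_k,\mat{Z}_k\rangle \ge -\|\mat{Z}_k\|_{\mathrm{spec}}\,\|\widehat{\mat{G}}_k\|_{*} \ge -\eta\|\widehat{\mat{G}}_k\|_{*}$. Taking the thin SVD $\widehat{\mat{G}}_k=\mat{U}_k\mat{\Sigma}_k\mat{V}_k^{\top}$, the choice $\mat{Z}_k=-\eta\mat{U}_k\mat{V}_k^{\top}$ attains this bound. Indeed, $\langle \mat{U}_k\mat{\Sigma}_k\mat{V}_k^{\top},\mat{U}_k\mat{V}_k^{\top}\rangle=\operatorname{tr}(\mat{\Sigma}_k)=\|\widehat{\mat{G}}_k\|_{*}$, and $\|\mat{U}_k\mat{V}_k^{\top}\|_{\mathrm{spec}}\le 1$, with the value $0$ handled trivially when $\widehat{\mat{G}}_k=\mathbf{0}$.

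The final step combines the slices. The slicewise minimizers $\mat{Z}_k=-\eta\widehat{\mat{O}}_k$, with $\widehat{\mat{O}}_k$ as in \Cref{eq:m-polar-transformed-slices}, jointly minimize the sum, and the minimum value is $-\eta\sum_k\|\widehat{\mat{G}}_k\|_*$. Pulling back with $\times_3\mat{M}^{\top}$ and using linearity of the mode-3 product gives $\Delta\ten{W}^\star=-\eta\,\widehat{\ten{O}}\times_3\mat{M}^{\top}=-\eta\operatorname{Orth}_{\mat{M}}(\ten{G})$, which is \Cref{eq:star-m-direction} by \Cref{eq:m-polar-map}. Feasibility follows because $\widehat{\Delta\ten{W}^\star}=-\eta\widehat{\ten{O}}$ has slice spectral norms at most $\eta$.

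The only step needing real care is the first one, the isometry and decoupling. There I must check the transform convention in \Cref{eq:bg-m-transform-slices} (the orientation of the action of $\mat{M}$ on fibers) and confirm that orthogonality, rather than some weaker property, is exactly what makes the inner product invariant and the feasible set a product set. Everything else is the standard matrix result.
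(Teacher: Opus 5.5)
Your proposal is correct and follows the paper's proof essentially step for step. Both use the Frobenius isometry of the orthogonal mode-3 transform, separate the objective and the constraint slice by slice, apply spectral--nuclear duality on each transformed slice with the polar factor as maximizer, and pull back via $\times_3\mat{M}^{\top}$. The differences are only cosmetic: you work directly with the $\eta$-ball rather than rescaling to the unit ball, and you state explicitly that the transform is a bijection, which the paper uses implicitly when it separates the maximization over slices.
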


The complete proof is provided in \Cref{app:real-steepest-descent-proof}. Since $\mat{M}$ is orthogonal, the mode-three transform preserves the Frobenius inner product, and the optimization problem \Cref{eq:star-m-steepest-descent} decomposes independently across the transformed frontal slices. Each transformed slice is then constrained only in its spectral norm. Therefore, spectral--nuclear norm duality identifies its polar factor as an optimal solution. Applying the inverse transform then yields \Cref{eq:star-m-direction}, establishing the exact \textsc{t-Muon} update as steepest descent under the $\star_{\mat{M}}$-spectral norm. The derivation above assumes a real orthogonal transform, while the unitary DFT case is treated separately in \Cref{app:dft-steepest-descent}.

\subsection{Convergence Guarantee}
\label{sec:convergence}

Our analysis follows the convergence analysis of MuonBP \citep{khaled2025muonbp}, which builds on \citet{kovalev2025understanding} and \citet{li2025note}. Let $f:\mathbb{R}^{m\times n\times K}\rightarrow\mathbb{R}$ be a differentiable objective defined on a tensor parameter $\ten{W}\in\mathbb{R}^{m\times n\times K}$. We equip this space with the Frobenius inner product, an arbitrary norm $\|\cdot\|$, and its dual norm $\|\cdot\|_{*}$. We further assume that $f$ is bounded below by $f^{\star}$.

The analysis relies on smoothness, an unbiased stochastic-gradient estimator with bounded variance, and equivalence between the dual and Frobenius norms.
\begin{assumption}[Smoothness]
\label{ass:smoothness}
The objective $f$ is $\beta$-smooth with respect to $\|\cdot\|$; that is, for all $\ten{W},\ten{W}'\in\mathbb{R}^{m\times n\times K}$,
\begin{equation}
    \left\|
        \nabla f(\ten{W})-\nabla f(\ten{W}')
    \right\|_{*}
    \leq
    \beta
    \left\|
        \ten{W}-\ten{W}'
    \right\|.
    \label{eq:convergence-smoothness}
\end{equation}
\end{assumption}

\begin{assumption}[Stochastic Gradient]
\label{ass:stochastic-gradient}
Let $\ten{G}(\ten{W};\zeta)$ denote the stochastic-gradient estimator, where $\zeta\sim\mathcal{D}_{\mathrm{data}}$ represents the sampled mini-batch. We assume that the estimator is unbiased and has bounded Frobenius variance:
\begin{equation}
\begin{aligned}
    &\mathbb{E}_{\zeta}
    \left[
        \ten{G}(\ten{W};\zeta)
    \right]
    =
    \nabla f(\ten{W}), \\
    &\mathbb{E}_{\zeta}
    \left[
        \left\|
            \ten{G}(\ten{W};\zeta)
            -
            \nabla f(\ten{W})
        \right\|_{\mathrm{F}}^{2}
    \right]
    \leq
    \sigma^{2}.
\end{aligned}
\label{eq:convergence-stochastic-gradient}
\end{equation}
\end{assumption}

\begin{assumption}[Norm Equivalence]
\label{ass:norm-equivalence}
For the selected norm $\|\cdot\|$, there exists a constant $\rho>0$ such that
\begin{equation}
    \|\ten{A}\|_{*} \leq \rho\|\ten{A}\|_{\mathrm{F}},
    \label{eq:convergence-norm-equivalence}
\end{equation}
for every $\ten{A}\in\mathbb{R}^{m\times n\times K}$.
\end{assumption}

At each iteration, the method applies unnormalized Nesterov-type momentum before solving a linear optimization problem over the unit ball of the selected norm. Let $\ten{G}_{t}=\ten{G}(\ten{W}_{t};\zeta_{t})$ and initialize $\ten{B}_{-1}=\mathbf{0}$. For $\mu\in[0,1)$, we compute
\begin{equation}
\begin{aligned}
    \ten{B}_{t}
    &=
    \mu\ten{B}_{t-1}+\ten{G}_{t}, \\
    \ten{Y}_{t}
    &=
    \ten{G}_{t}+\mu\ten{B}_{t},
\end{aligned}
\label{eq:convergence-momentum}
\end{equation}
and select
\begin{equation}
\begin{aligned}
    \ten{D}_{t}
    &\in
    \operatorname*{arg\,max}_{\|\ten{D}\|\leq 1}
    \left\langle
        \ten{Y}_{t},
        \ten{D}
    \right\rangle, \\
    \ten{W}_{t+1}
    &=
    \ten{W}_{t}-\eta \ten{D}_{t},
\end{aligned}
\label{eq:convergence-update}
\end{equation}
where $\eta>0$ is a constant step size.

\begin{theorem}[Tensor Norm-Constrained Optimization with Unnormalized Nesterov Momentum]
\label{thm:convergence}
Suppose that \Cref{ass:smoothness,ass:stochastic-gradient,ass:norm-equivalence} hold and that $f$ is bounded below by $f^{\star}$. Define
\begin{equation}
    \Delta_{0}
    :=
    f(\ten{W}_{0})-f^{\star}.
\end{equation}
Then, for every $\eta>0$, $\mu\in[0,1)$, and $T_{\mathrm{iter}}\geq 1$, the iterates generated by \Cref{eq:convergence-update} satisfy
\begin{equation}
\begin{aligned}
&\mathbb{E}\!\left[
    \min_{0 \leq t < T_{\mathrm{iter}}}
    \left\|
        \nabla f(\ten{W}_t)
    \right\|_{*}
\right]
\\
&\quad\leq
\frac{\Delta_0}
     {\eta T_{\mathrm{iter}}}
+
\frac{
    2\mu^2\sqrt{2\beta\Delta_0}
}{
    (1-\mu)T_{\mathrm{iter}}
}
\\
&\qquad
+
\frac{2\beta\eta\mu^2}
     {1-\mu}
+
2\rho\sigma\chi_\mu
+
\frac{\beta\eta}{2}, 
\end{aligned}
\label{eq:convergence-bound}
\end{equation}
where $$ \chi_{\mu}
    :=
    (1-\mu)
    \sqrt{
        2\mu
        +
        \frac{1}{1-\mu^{2}}
    }.$$
\end{theorem}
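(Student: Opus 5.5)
The plan is a standard one-step descent argument followed by control of the momentum error. First I would apply the descent lemma implied by \Cref{ass:smoothness}. Since $\|\ten{D}_t\|\le 1$, it gives
\[
f(\ten{W}_{t+1}) \le f(\ten{W}_t) - \eta\langle \nabla f(\ten{W}_t),\ten{D}_t\rangle + \tfrac{\beta\eta^2}{2}.
\]
Next I would write $\nabla f(\ten{W}_t) = \ten{Y}_t' + (\nabla f(\ten{W}_t) - \ten{Y}_t')$, where $\ten{Y}_t' := (1-\mu)\ten{Y}_t/(1-\mu^2)$ or a similar rescaling that makes $\ten{Y}_t$ an unbiased-in-the-limit average of gradients. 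Because the linear maximization oracle is scale invariant, $\ten{D}_t$ is the same for $\ten{Y}_t$ and for any positive multiple of it. So the natural normalized quantity is $\ten{C}_t := (1-\mu)\ten{Y}_t$, with weights $(1-\mu)(\mu^{s+1}+\dots)$ summing to roughly one. By the definition of $\ten{D}_t$ we have $\langle \ten{C}_t,\ten{D}_t\rangle = \|\ten{C}_t\|_*$. Together with $|\langle \ten{E},\ten{D}_t\rangle|\le \|\ten{E}\|_*$, this yields
\[
\langle\nabla f(\ten{W}_t),\ten{D}_t\rangle \ge \|\nabla f(\ten{W}_t)\|_* - 2\|\ten{C}_t-\nabla f(\ten{W}_t)\|_*.
\]
Telescoping over $t$, dividing by $\eta T_{\mathrm{iter}}$, and bounding the minimum by the average then gives $\Delta_0/(\eta T)+\beta\eta/2$. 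What remains is the term $\frac{2}{T}\sum_t\mathbb{E}\|\ten{C}_t-\nabla f(\ten{W}_t)\|_*$.

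To control that error term, I would unroll the recursion. This gives $\ten{C}_t = (1-\mu)\bigl(\ten{G}_t + \mu\sum_{s=0}^t \mu^{t-s}\ten{G}_s\bigr)$, with total weight $(1-\mu)(1+\mu(1-\mu^{t+1})/(1-\mu))$. I would split the error into three pieces:
\begin{itemize}
\item[(i)] Noise: $\sum_s w_s(\ten{G}_s-\nabla f(\ten{W}_s))$.
\item[(ii)] Drift: $\sum_s w_s(\nabla f(\ten{W}_s)-\nabla f(\ten{W}_t))$.
\item[(iii)] Initialization bias: the deficit of total weight, $(1-\mu)\mu^{t+2}/(1-\mu)\cdot\nabla f(\ten{W}_t)$, i.e.\ $\mu^{t+2}\nabla f(\ten{W}_t)$.
\end{itemize}
For (i) I would pass to Frobenius norm via \Cref{ass:norm-equivalence} and then apply Jensen. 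Martingale-difference orthogonality under \Cref{ass:stochastic-gradient} gives $\rho\sigma\sqrt{\sum_s w_s^2}$. The weights are $(1-\mu)(1+\mu)$ on $\ten{G}_t$ and $(1-\mu)\mu^{t-s+1}$ for $s<t$. The sum of squares is therefore at most $(1-\mu)^2\bigl((1+\mu)^2+\mu^2/(1-\mu^2)\bigr)$, and I would massage this into $(1-\mu)^2(2\mu+1/(1-\mu^2))$, i.e.\ $\chi_\mu^2$. This is where the factor $2\rho\sigma\chi_\mu$ comes from. For (ii) I would use smoothness, which gives $\|\nabla f(\ten{W}_s)-\nabla f(\ten{W}_t)\|_*\le\beta\eta(t-s)$. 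Summing $(1-\mu)\mu^{t-s+1}(t-s)$ gives $\beta\eta\mu^2/(1-\mu)$, which after the factor $2$ yields $2\beta\eta\mu^2/(1-\mu)$.

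For (iii) I would bound $\|\nabla f(\ten{W}_t)\|_*$. The cleanest route is to use $\|\nabla f(\ten{W}_0)\|_*\le\sqrt{2\beta\Delta_0}$, which follows from smoothness and the lower bound $f^\star$. I would then handle growth along the trajectory through the drift bound, or else absorb the $\mu^{t+2}\|\nabla f(\ten{W}_t)\|_*$ term on the left side. Summing $2\mu^{t+2}$ over $t$ gives $2\mu^2/(1-\mu)$, which divided by $T$ produces the $2\mu^2\sqrt{2\beta\Delta_0}/((1-\mu)T)$ term. Any leftover drift pieces would be merged into the $\beta\eta\mu^2/(1-\mu)$ term.

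The main obstacle I expect is the bookkeeping in the noise and bias terms. Two things have to come out exactly right: the exact weights produced by the Nesterov form, and the choice of normalization. Together these must reproduce the specific constant $\chi_\mu$ and the $\sqrt{2\beta\Delta_0}$ bias term. If the direct normalization does not match, I would try instead to compare $\ten{C}_t$ against the target $(1-\mu)(1+\mu)\nabla f(\ten{W}_t)+\dots$, letting the weight deficit act as the initialization bias, following the MuonBP analysis.
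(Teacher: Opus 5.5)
Your overall architecture matches the paper's. You use the smoothness descent step with $\|\ten{D}_t\|\le 1$, and scale invariance of the oracle so you can work with $(1-\mu)\ten{Y}_t$. You then use the inequality $\langle\nabla f(\ten{W}_t),\ten{D}_t\rangle\ge\|\nabla f(\ten{W}_t)\|_*-2\|\ten{E}_t\|_*$ for $\ten{E}_t=\nabla f(\ten{W}_t)-(1-\mu)\ten{Y}_t$, a bias/drift/noise split, and telescoping with the minimum bounded by the average. However, two steps do not deliver the stated constants as written.

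The first is an arithmetic slip in the noise term. For $s<t$ the weight is $(1-\mu)\mu^{t-s+1}$ with $t-s\ge 1$, so the squared tail is $\sum_{s<t}\mu^{2(t-s+1)}\le\mu^4/(1-\mu^2)$, not $\mu^2/(1-\mu^2)$. With your $\mu^2$, the bracket $(1+\mu)^2+\mu^2/(1-\mu^2)$ exceeds $2\mu+1/(1-\mu^2)$ by exactly $\mu^2$, so no rearrangement yields $\chi_\mu$. With $\mu^4$ you have the identity $(1+\mu)^2+\mu^4/(1-\mu^2)=2\mu+1/(1-\mu^2)$, and $\chi_\mu^2$ comes out exactly, as in the paper.

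The second issue is more substantive. You compare the weighted average of the $\ten{G}_s$ against $\nabla f(\ten{W}_t)$, so your bias term is $\mu^{t+2}\nabla f(\ten{W}_t)$. This is random and not controlled by $\Delta_0$. The method is not monotone, so applying $\|\nabla f(\ten{W})\|_*^2\le 2\beta(f(\ten{W})-f^\star)$ at $\ten{W}_t$ does not give $\sqrt{2\beta\Delta_0}$.

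Absorbing the term on the left fails. The coefficient becomes $1-2\mu^{t+2}$, which is negative at $t=0$ whenever $\mu>1/\sqrt{2}$, and it would not produce the $\sqrt{2\beta\Delta_0}$ term anyway.

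Your re-anchoring route does work, but only with exact finite sums. Using $\|\nabla f(\ten{W}_t)\|_*\le\sqrt{2\beta\Delta_0}+\beta\eta t$, the total coefficient of $\beta\eta$ is $\sum_{s<t}w_s(t-s)+t\mu^{t+2}=\sum_{i=1}^{t}\mu^{t-i+2}\le\mu^2/(1-\mu)$. If you instead first replace your drift by its infinite-sum bound $\mu^2/(1-\mu)$ and then add $t\mu^{t+2}\beta\eta$ separately, you incur an extra term of size up to $2\beta\eta\mu^3/((1-\mu)^2T_{\mathrm{iter}})$, which is not in the statement.

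The paper avoids this bookkeeping by deriving the recursion $\ten{E}_t=\mu\ten{E}_{t-1}+\mu^2\ten{R}_t+(1-\mu^2)\ten{Z}_t-\mu(1-\mu)\ten{Z}_{t-1}$ with $\ten{E}_0=\mu^2\nabla f(\ten{W}_0)+(1-\mu^2)\ten{Z}_0$. Here $\ten{R}_t=\nabla f(\ten{W}_t)-\nabla f(\ten{W}_{t-1})$ and $\ten{Z}_t=\nabla f(\ten{W}_t)-\ten{G}_t$. Unrolling anchors the bias at the deterministic $\nabla f(\ten{W}_0)$ and gives the drift directly as $\mu^2\sum_{i=1}^{t}\mu^{t-i}\ten{R}_i$ with $\|\ten{R}_i\|_*\le\beta\eta$.
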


The theorem bounds the smallest dual-gradient norm encountered during optimization and separates the effects of the initial optimality gap, momentum, stochastic-gradient noise, and the step size. When $\|\cdot\|=\|\cdot\|_{\star_{\mat{M}}}$, the exact linear oracle in \Cref{eq:convergence-update} is the exact \textsc{t-Muon} direction derived in \Cref{sec:steepest-descent}. The guarantee therefore applies to the idealized exact polar update; the complete proof is provided in the \Cref{app:convergence-proof}.

\section{Experiments}
\label{sec:experiments}
We evaluate \textsc{t-Muon} on decoder-only language-model pretraining. We first compare AdamW, Muon, and \textsc{t-Muon} variants on GPT-2-style and LLaMA-style models \citep{radford2019language,touvron2023llama} trained on 1B FineWeb tokens \citep{penedo2024fineweb}. Then, we test whether these comparisons hold under longer context lengths and larger token budgets. Finally, we scale up to LLaMA-style models with up to 1.3B parameters trained on up to 13.1B tokens of FineWeb-Edu \citep{penedo2024fineweb}.

\subsection{Experimental Setup}
\label{sec:main-pretraining-setup}

Our main evaluation covers seven decoder-only models: four GPT-2-style models and three LLaMA-style models, with parameter counts ranging from approximately 50 million to 780 million. The FineWeb-Edu experiments add a LLaMA-1B model with 1.34B parameters. The model architectures are summarized in \Cref{tab:model-configurations}. Detailed hyperparameters are provided in \Cref{tab:main-hyperparameters}.

We use the same optimizer assignment across all models. For the Muon baseline, eligible two-dimensional attention and MLP weights receive Muon updates, while embeddings, the language-model head, and other non-matrix parameters receive AdamW updates. For the \textsc{t-Muon} family, the query, key, and value projections receive tensorized updates; the attention-output and MLP matrices receive standard Muon updates; and the remaining parameters receive AdamW updates. We evaluate the Layer, QKV, Head, and Head-stack tensorization modes under the identity (-$I$), DFT (-$F$), and orthonormal type-II DCT (-$C$) and DST (-$S$) transforms \citep{kernfeld2015tensor,kilmer2021tensor,martucci1994symmetric}; for the two-slice Layer mode, the DCT and DST coincide with the DFT.

Unless otherwise stated, we sweep each optimizer over the learning-rate grid in \Cref{tab:lr-grids} with three random seeds. We select the learning rate that achieves the lowest mean final validation perplexity across seeds and report the corresponding mean and standard deviation.

\subsection{Main Comparisons on FineWeb}
\label{sec:fineweb-main}

\paragraph{GPT-2-style models.}
\label{sec:gpt-pretraining}
We evaluate \textsc{t-Muon} on four GPT-2-style models: GPT-50M, GPT-Small, GPT-Base, and GPT-Large. The 1B FineWeb tokens are tokenized by the GPT-2 tokenizer with vocabulary size 50,257 \citep{radford2019language}. \Cref{tab:gpt-ppl-main} compares AdamW and Muon with the two strongest \textsc{t-Muon} variants, while the complete comparison is reported in \Cref{tab:gpt-ppl-full}.

\begin{table}[t]
    \centering
    \setlength{\tabcolsep}{3pt}
    \begin{tabular}{@{}lcccc@{}}
        \toprule
        \textbf{Optimizer}
        & \textbf{50M}
        & \textbf{Small}
        & \textbf{Base}
        & \textbf{Large} \\
        \midrule
        AdamW
        & 47.353
        & 42.623
        & 38.641
        & 38.058 \\
        Muon
        & 42.235
        & 35.483
        & 30.350
        & 27.953 \\
        \midrule
        \multicolumn{5}{l}{\textit{\textsc{t-Muon}}} \\
        Head-$I$
        & 40.568
        & 33.893
        & 29.074
        & \textbf{26.959} \\
        Head-stack-$I$
        & \textbf{40.515}
        & \textbf{33.767}
        & \textbf{29.071}
        & 26.966 \\
        \bottomrule
    \end{tabular}
    \caption{
        Mean validation perplexity on GPT-2-style models trained on 1B-FineWeb tokens. Lower is better, and the best result in each column is bold.
    }
    \label{tab:gpt-ppl-main}
\end{table}

The GPT-2 results show that Head-$I$ achieves the lowest validation perplexity at every evaluated scale, outperforming both AdamW and Muon. Under the identity transform, Head-$I$ and Head-stack-$I$ implement the same update for the GPT-2 parameterization, which is consistent with their nearly identical validation perplexities. The complete comparison in \Cref{tab:gpt-ppl-full} further shows that, for every tensorization mode, the identity transform outperforms the other transforms. \Cref{fig:gpt-train} compares the training loss trajectories of Muon and Head-$I$ at their selected learning rates across all four model scales. Head-$I$ also reaches Muon's final smoothed training loss in fewer optimization steps, with step speedups of $1.14$--$1.19\times$; 

\paragraph{LLaMA-style models.}
\label{sec:llama-pretraining}
To examine whether the GPT-2 findings extend to a different Transformer architecture, we repeat the same experiment on LLaMA-60M, LLaMA-130M, and LLaMA-350M models. The 1B FineWeb tokens are tokenized by the LLaMA-2 tokenizer with vocabulary size 32,000 \citep{touvron2023llama2}. \Cref{tab:llama-ppl-main} presents the baselines and the two strongest \textsc{t-Muon} variants, with the complete results provided in \Cref{tab:llama-ppl-full}.

\begin{table}[t]
    \centering
    \setlength{\tabcolsep}{5pt}
    \begin{tabular}{@{}lccc@{}}
        \toprule
        \textbf{Optimizer}
        & \textbf{60M}
        & \textbf{130M}
        & \textbf{350M} \\
        \midrule
        AdamW
        & 32.166
        & 28.151
        & 24.029 \\
        Muon
        & 26.558
        & 22.363
        & 19.386 \\
        \midrule
        \multicolumn{4}{@{}l}{\textit{\textsc{t-Muon}}} \\
        Head-$I$
        & 25.948
        & \textbf{21.915}
        & \textbf{19.055} \\
        Head-stack-$I$
        & \textbf{25.930}
        & \textbf{21.915}
        & 19.058 \\
        \bottomrule
    \end{tabular}
    \caption{
        Mean validation perplexity on LLaMA-style models trained on 1B-FineWeb tokens. Lower is better, and the best result in each column is bold.
    }
    \label{tab:llama-ppl-main}
\end{table}

The LLaMA experiments reinforce the main GPT-2 finding: Head-$I$ outperforms AdamW and Muon at every evaluated scale. As in the GPT-2 implementation, Head-$I$ and Head-stack-$I$ reduce to the same head-wise Muon update, which is consistent with their nearly identical validation perplexities. Also, under the identity transform, Layer-$I$ independently applies Muon to each layer slice, while QKV-$I$ applies Muon to each attention matrix separately; both therefore reduce to standard Muon. The complete results in \Cref{tab:llama-ppl-full} also show that none of the other transforms improves over its identity-transform counterpart. \Cref{fig:llama-train} presents the corresponding training-loss trajectories for Muon and Head-$I$ at their selected learning rates across the three LLaMA-style models.

\subsection{Longer Contexts and Token Budgets}
\label{app:robustness}

Next, we examine whether the main comparison extends beyond the shared 1,024-token context length and 1B-token training budget. The compact tables in each part report AdamW, Muon, and the strongest \textsc{t-Muon} variant for each setting; the complete comparisons are provided in \Cref{tab:long-context-full,tab:long-budget-full}.

\paragraph{Longer Contexts.}
We increase the context length to 2,048 for GPT-Small and to 4,096 for GPT-Base and LLaMA-350M. These experiments use seed 42 only, and the learning rate is selected separately for each optimizer according to final validation perplexity. As shown in \Cref{tab:long-context-main}, the best \textsc{t-Muon} variant outperforms both AdamW and Muon in each longer-context setting. The best tensorization mode varies across the three models: Head-$I$ performs best on GPT-Small, Layer-$F$ performs best on GPT-Base, and Head-stack-$I$ performs best on LLaMA-350M. Notably, GPT-Base is the first evaluated setting in which a DFT-transform variant performs best: Layer-$F$ slightly outperforms Layer-$I$ and achieves the lowest validation perplexity overall.

\begin{table}[t]
    \centering
    \setlength{\tabcolsep}{2.5pt}
    \begin{tabular}{@{}lcccc@{}}
        \toprule
        \textbf{Setting}
        & \textbf{Ctx.}
        & \textbf{AdamW}
        & \textbf{Muon}
        & \textbf{\textsc{t-Muon}} \\
        \midrule
        GPT-Small
        & 2048
        & 65.99
        & 41.87
        & \shortstack{\textbf{36.91}\\[-1pt]
        {\scriptsize Head-$I$}}\\
        GPT-Base
        & 4096
        & 221.50
        & 55.90
        & \shortstack{\textbf{49.77}\\[-1pt]
        {\scriptsize Layer-$F$}} \\
        LLaMA-350M
        & 4096
        & 70.25
        & 28.71
        & \shortstack{\textbf{27.31}\\[-1pt]
        {\scriptsize Head-stack-$I$}} \\
        \bottomrule
    \end{tabular}
    \caption{
        Single-seed validation perplexity under longer contexts. The strongest \textsc{t-Muon} variant for each setting is identified below its score. Ctx.\ denotes context length in tokens.
    }
    \label{tab:long-context-main}
\end{table}

\paragraph{Longer Token Budgets.}
We extend training to 3B tokens for GPT-Small, 7B tokens for GPT-Base, and 2.2B tokens for LLaMA-130M. Learning rates are selected by the three-seed sweep. We report the mean final validation perplexity over all three seeds in \Cref{tab:long-budget-main}. The larger-budget experiments again favor \textsc{t-Muon} over AdamW and Muon in all three settings. Each selected method uses the identity transform, but the strongest tensorization changes from QKV on GPT-Small to layer mode on GPT-Base and head mode on LLaMA-130M.

\begin{table}[t]
    \centering
    \setlength{\tabcolsep}{2.5pt}
    \begin{tabular}{@{}lcccc@{}}
        \toprule
        \textbf{Setting}
        & \textbf{Tokens}
        & \textbf{AdamW}
        & \textbf{Muon}
        & \textbf{\textsc{t-Muon}} \\
        \midrule
        GPT-Small
        & 3B
        & 28.97
        & 27.94
        & \shortstack{\textbf{27.03}\\[-1pt]
          {\scriptsize QKV-$I$}} \\
        GPT-Base
        & 7B
        & 20.96
        & 20.47
        & \shortstack{\textbf{19.85}\\[-1pt]
          {\scriptsize Layer-$I$}} \\
        LLaMA-130M
        & 2.2B
        & 21.51
        & 19.35
        & \shortstack{\textbf{19.00}\\[-1pt]
          {\scriptsize Head-$I$}} \\
        \bottomrule
    \end{tabular}
    \caption{
        Mean validation perplexity under larger token budgets for FineWeb datasets. The strongest \textsc{t-Muon} variant for each setting is identified below its score.}
    \label{tab:long-budget-main}
\end{table}

\begin{figure*}[t]
    \centering
    \begin{subfigure}[t]{0.49\textwidth}
        \centering
        \includegraphics[width=\linewidth]
        {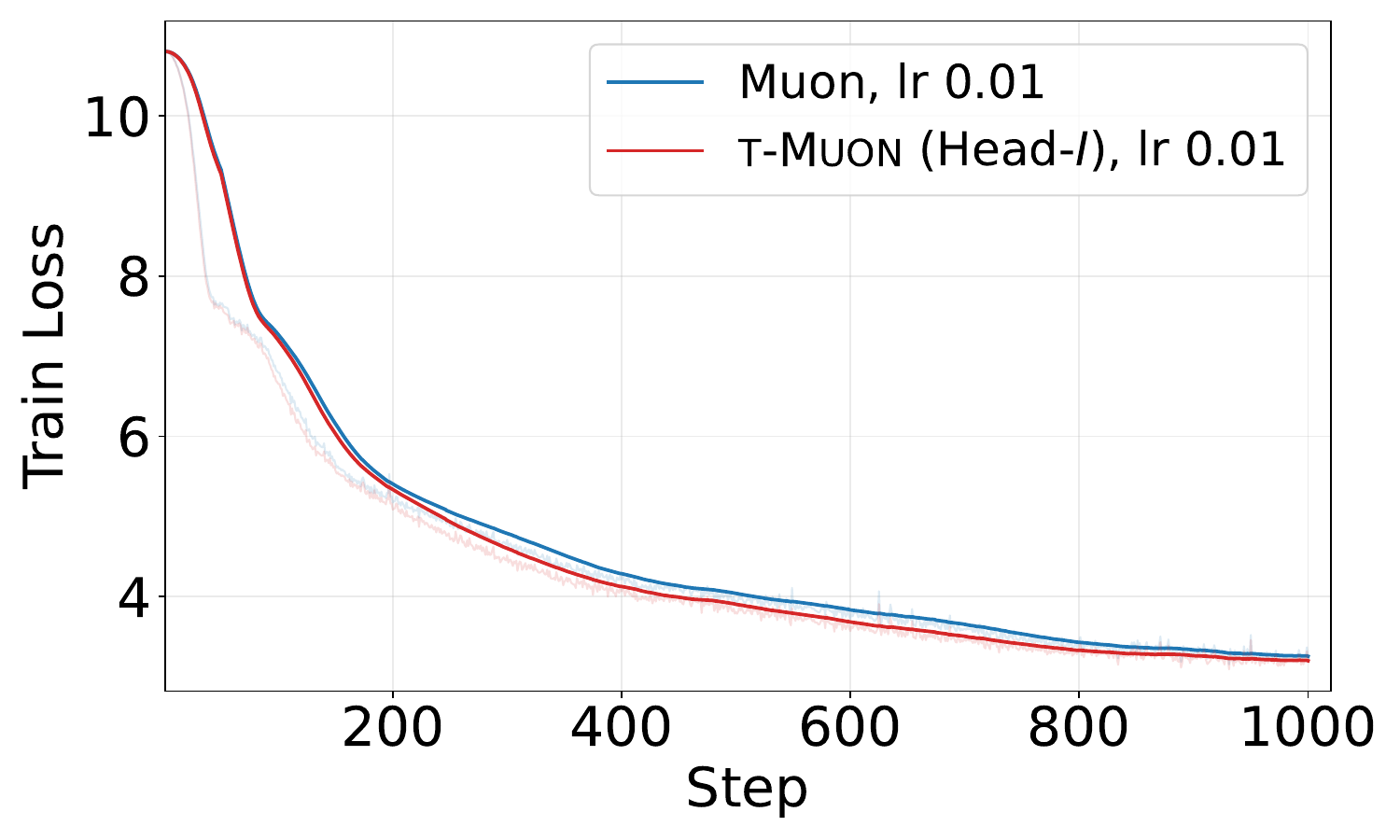}
        \caption{LLaMA-1B (13.1B tokens), early training}
        \label{fig:edu-1b-early}
    \end{subfigure}\hfill%
    \begin{subfigure}[t]{0.49\textwidth}
        \centering
        \includegraphics[width=\linewidth]
        {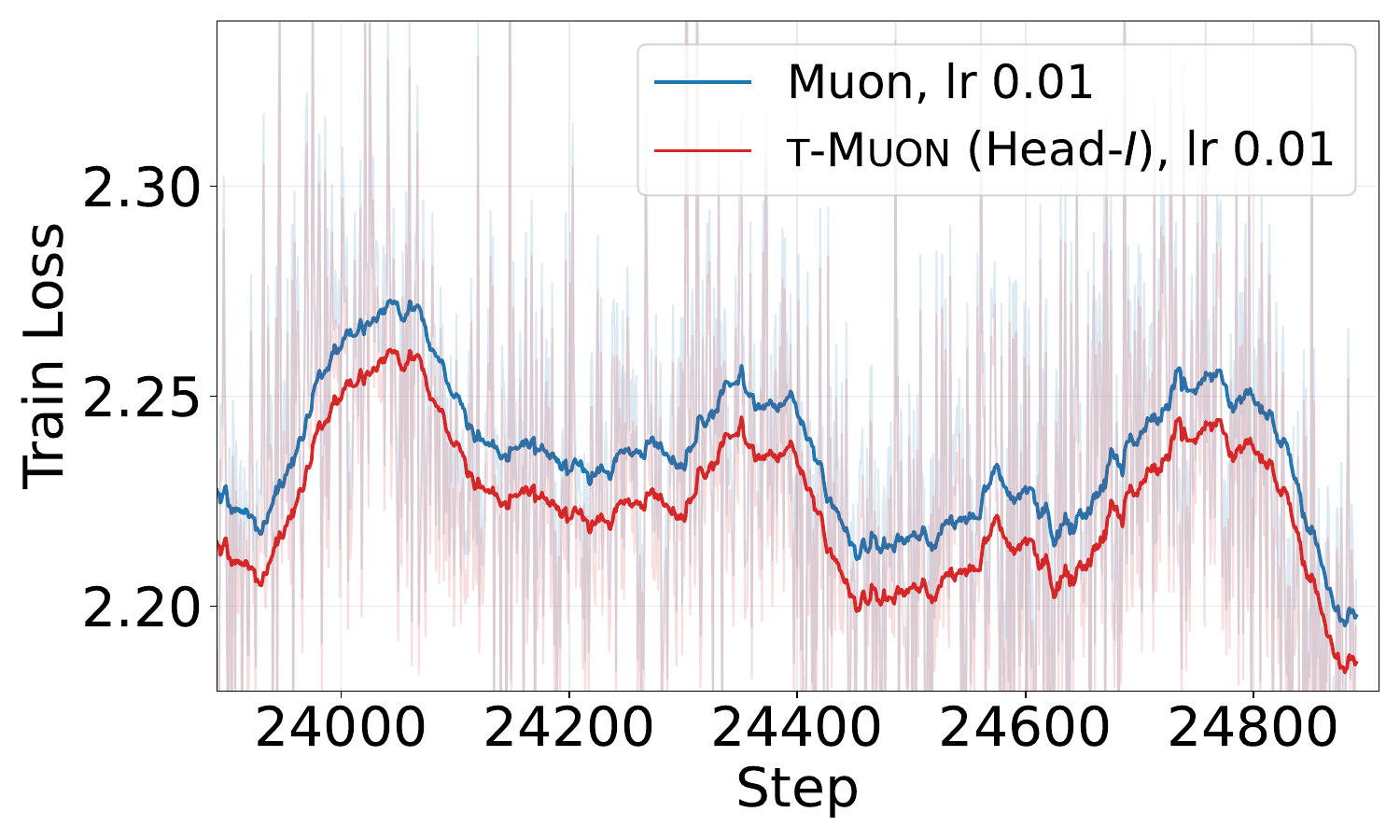}
        \caption{LLaMA-1B (13.1B tokens), late training}
        \label{fig:edu-1b-late}
    \end{subfigure}

    \caption{
        Training-loss curves of LLaMA-1B on 13.1B FineWeb-Edu tokens. Left: The first 1000 training steps. Right: the last 1000 training steps. Solid lines are the average over the last 50 steps.  
    }
    \label{fig:edu-train}
\end{figure*}
\subsection{Larger-Scale Pretraining on FineWeb-Edu}
\label{sec:fineweb-edu}

To test whether the gains persist at larger scale, we train LLaMA-60M, LLaMA-130M, LLaMA-350M, and LLaMA-1B on 1.2B, 2.7B, 7.4B, and 13.1B FineWeb-Edu tokens \citep{penedo2024fineweb}. The data are tokenized with the LLaMA-2 tokenizer; the other settings follow \Cref{tab:main-hyperparameters}. Due to limited computing resources, learning rates are swept at small scale and carried upward (\Cref{tab:lr-grids}): all optimizers are compared at 60M and 130M; AdamW, Muon, and the three best \textsc{t-Muon} variants at 130M are trained at 350M; and Muon and the best variant, Head-$I$, are trained at 1B. 

\begin{table}[t]
    \centering
    \setlength{\tabcolsep}{4pt}
    \begin{tabular}{@{}lcccc@{}}
        \toprule
        \textbf{Optimizer}
        & \textbf{60M}
        & \textbf{130M}
        & \textbf{350M}
        & \textbf{1B} \\
        \textit{Tokens}
        & \textit{1.2B}
        & \textit{2.7B}
        & \textit{7.4B}
        & \textit{13.1B} \\
        \midrule
        AdamW
        & 23.89
        & 16.59
        & 11.92
        & -- \\
        Muon
        & 20.58
        & 15.30
        & 11.60
        & 9.45 \\
        \midrule
        \multicolumn{5}{@{}l}{\textit{\textsc{t-Muon}}} \\
        Head-$I$
        & \textbf{20.23}
        & \textbf{15.09}
        & \textbf{11.41}
        & \textbf{9.34} \\
        Layer-$F$
        & 20.74
        & 15.44
        & 11.69
        & -- \\
        Head-stack-$S$
        & 20.65
        & 15.44
        & 11.69
        & -- \\
        \bottomrule
    \end{tabular}
    \caption{
        Validation perplexity of LLaMA-style models trained on FineWeb-Edu. Single runs at the selected learning rate; all variants are reported in \Cref{tab:edu-full}. The best result in each column is bold.
    }
    \label{tab:edu-main}
\end{table}

From \Cref{tab:edu-main}, Head-$I$ achieves lower validation perplexity at every scale and stays below Muon throughout training as shown in \Cref{fig:edu-train} . None of the DFT, DCT, or DST variants improves over Muon on FineWeb-Edu. The complete comparison is reported in \Cref{tab:edu-full} with single seed.

\section{Discussion and Conclusion}
\label{sec:discussion-conclusion}

This work extends Muon's matrix polar update to tensorized Transformer gradients through the $\mat{M}$-product and t-SVD. We show that the exact \textsc{t-Muon} direction is steepest descent under the $\star_{\mat{M}}$-spectral norm and establish a nonasymptotic stationarity guarantee for an idealized stochastic method with Nesterov-type momentum. In the FineWeb experiments, the identity-transform head variants achieve lower validation perplexity than AdamW and Muon on every evaluated GPT-2- and LLaMA-style model. The longer-context and longer-token-budget experiments further show that the improvement can persist beyond the shared one-billion-token setting. For FineWeb-Edu, Head-$I$ continues to improve on Muon in  pretraining up to 1.3B parameters. The identity transform is the strongest choice in all main comparisons and in the evaluated longer-token-budget experiments, although the best tensorization mode shifts among head, QKV, and layer groupings. The DFT Layer variant performs slightly better in the single-seed longer-context GPT-Base setting, providing one positive case for transform mixing. These findings motivate further work on selecting tensorization and transform choices for a given model and parameter group. In summary, \textsc{t-Muon} explores a principled t-SVD-based approach to Muon updates for tensor-valued gradient groups, and the evaluated variants show that architecture-aligned orthogonalization domains can improve language-model pretraining while leaving the choice of tensorization and transform as an open design problem.

\section*{Limitations}
This work has limitations in the evaluated transforms, the scope of the theory, the experimental scale, and the GPT-2 baseline. First, our experiments evaluate only fixed transforms: the identity, DFT, DCT, and DST. Our formulation and analysis apply to any fixed orthogonal transform. We do not evaluate gradient-dependent transforms inspired by data-adaptive mode-3 constructions for tensor compression \citep{kilmer2021tensor}. Second, our theoretical guarantees concern an idealized method with an exact polar map, a fixed transform, and a single parameter group. In contrast, the practical optimizer uses a polynomial polar approximation \citep{amsel2025polar} and combines tensorized, matrix, and AdamW updates across parameter groups. Third, our experiments cover models of up to 1.3B parameters and up to 13.1B training tokens, the longer-context and FineWeb-Edu comparisons use a single seed, and we evaluate validation loss and perplexity only, not downstream task performance. Finally, in the GPT-2 implementation, the query, key, and value projections form a single fused matrix, which the Muon baseline orthogonalizes as a whole, whereas every \textsc{t-Muon} mode first splits it into separate blocks. Unlike in the LLaMA-style models, whose projections are separate matrices, part of the gain over Muon on GPT-2 may therefore come from this split \citep{jordan2024muon}; Our GPT-2 experiments do not isolate this effect, so larger gains over fused-QKV Muon may partly reflect partitioning rather than transform mixing.
\bibliography{references}

@techreport{radford2019language,
  title       = {Language Models Are Unsupervised Multitask Learners},
  author      = {Radford, Alec and Wu, Jeffrey and Child, Rewon and Luan, David and Amodei, Dario and Sutskever, Ilya},
  institution = {OpenAI},
  year        = {2019},
  url        = {https://cdn.openai.com/better-language-models/language_models_are_unsupervised_multitask_learners.pdf}
}

@inproceedings{brown2020language,
  title     = {Language Models Are Few-Shot Learners},
  author    = {Brown, Tom B. and Mann, Benjamin and Ryder, Nick and Subbiah, Melanie and Kaplan, Jared D. and Dhariwal, Prafulla and Neelakantan, Arvind and Shyam, Pranav and Sastry, Girish and Askell, Amanda and Agarwal, Sandhini and Herbert-Voss, Ariel and Krueger, Gretchen and Henighan, Tom and Child, Rewon and Ramesh, Aditya and Ziegler, Daniel and Wu, Jeffrey and Winter, Clemens and Hesse, Chris and Chen, Mark and Sigler, Eric and Litwin, Mateusz and Gray, Scott and Chess, Benjamin and Clark, Jack and Berner, Christopher and McCandlish, Sam and Radford, Alec and Sutskever, Ilya and Amodei, Dario},
  booktitle = {Advances in Neural Information Processing Systems},
  volume    = {33},
  pages     = {1877--1901},
  publisher = {Curran Associates, Inc.},
  year      = {2020},
  url       = {https://proceedings.neurips.cc/paper_files/paper/2020/hash/1457c0d6bfcb4967418bfb8ac142f64a-Abstract.html}
}

@article{touvron2023llama,
  title   = {{LLaMA}: Open and Efficient Foundation Language Models},
  author  = {Touvron, Hugo and Lavril, Thibaut and Izacard, Gautier and Martinet, Xavier and Lachaux, Marie-Anne and Lacroix, Timoth{\'e}e and Rozi{\`e}re, Baptiste and Goyal, Naman and Hambro, Eric and Azhar, Faisal and Rodriguez, Aurelien and Joulin, Armand and Grave, Edouard and Lample, Guillaume},
  journal = {arXiv preprint arXiv:2302.13971},
  year    = {2023},
  url     = {https://arxiv.org/abs/2302.13971},
  doi        = {10.48550/arXiv.2302.13971}
}

@inproceedings{penedo2024fineweb,
  title     = {The {FineWeb} Datasets: Decanting the Web for the Finest Text Data at Scale},
  author    = {Penedo, Guilherme and Kydl{\'i}{\v{c}}ek, Hynek and Ben Allal, Loubna and Lozhkov, Anton and Mitchell, Margaret and Raffel, Colin and von Werra, Leandro and Wolf, Thomas},
  booktitle = {Advances in Neural Information Processing Systems},
  volume    = {37},
  publisher = {Curran Associates, Inc.},
  year      = {2024},
  note      = {Datasets and Benchmarks Track},
  doi       = {10.52202/079017-0970},
  url       = {https://proceedings.neurips.cc/paper_files/paper/2024/hash/370df50ccfdf8bde18f8f9c2d9151bda-Abstract-Datasets_and_Benchmarks_Track.html},
  pages      = {30811--30849}
}

@inproceedings{kingma2015adam,
  title     = {{Adam}: A Method for Stochastic Optimization},
  author    = {Kingma, Diederik P. and Ba, Jimmy},
  booktitle = {International Conference on Learning Representations},
  year      = {2015},
  url        = {https://arxiv.org/abs/1412.6980}
}

@inproceedings{loshchilov2019decoupled,
  title     = {Decoupled Weight Decay Regularization},
  author    = {Loshchilov, Ilya and Hutter, Frank},
  booktitle = {International Conference on Learning Representations},
  year      = {2019},
  url        = {https://openreview.net/forum?id=Bkg6RiCqY7}
}

@book{higham2008functions,
  title     = {Functions of Matrices: Theory and Computation},
  author    = {Higham, Nicholas J.},
  publisher = {Society for Industrial and Applied Mathematics},
  address   = {Philadelphia, PA},
  year      = {2008},
  doi       = {10.1137/1.9780898717778}
}

@article{kilmer2011factorization,
  title   = {Factorization Strategies for Third-Order Tensors},
  author  = {Kilmer, Misha E. and Martin, Carla D.},
  journal = {Linear Algebra and its Applications},
  volume  = {435},
  number  = {3},
  pages   = {641--658},
  year    = {2011},
  doi     = {10.1016/j.laa.2010.09.020}
}

@inproceedings{amsel2025polar,
  title     = {The {Polar Express}: Optimal Matrix Sign Methods and Their Application to the {Muon} Algorithm},
  author    = {Amsel, Noah and Persson, David and Musco, Christopher and Gower, Robert M.},
  booktitle = {International Conference on Learning Representations},
  year      = {2026},
  url        = {https://proceedings.iclr.cc/paper_files/paper/2026/hash/df88b275bef31ac96c85f0c4013734fc-Abstract-Conference.html}
}

@article{grishina2025accelerating,
  title   = {Accelerating {Newton--Schulz} Iteration for Orthogonalization via {Chebyshev}-Type Polynomials},
  author  = {Grishina, Ekaterina and Smirnov, Matvey and Rakhuba, Maxim},
  journal = {arXiv preprint arXiv:2506.10935},
  year    = {2025},
  url     = {https://arxiv.org/abs/2506.10935},
  doi        = {10.48550/arXiv.2506.10935}
}

@article{si2025adamuon,
  title   = {{AdaMuon}: Adaptive {Muon} Optimizer},
  author  = {Si, Chongjie and Zhang, Debing and Shen, Wei},
  journal = {arXiv preprint arXiv:2507.11005},
  year    = {2025},
  url     = {https://arxiv.org/abs/2507.11005},
  doi        = {10.48550/arXiv.2507.11005}
}

@article{chang2026muoneq,
  title   = {{MuonEq}: Balancing Before Orthogonalization with Lightweight Equilibration},
  author  = {Chang, Da and Shi, Qiankun and Zhang, Lvgang and Li, Yu and Zhang, Ruijie and Lu, Yao and Liu, Yongxiang and Yuan, Ganzhao},
  journal = {arXiv preprint arXiv:2603.28254},
  year    = {2026},
  url     = {https://arxiv.org/abs/2603.28254},
  doi        = {10.48550/arXiv.2603.28254}
}

@article{dolatabadi2026numuon,
  title   = {{NuMuon}: Nuclear-Norm-Constrained {Muon} for Compressible {LLM} Training},
  author  = {Dolatabadi, Hadi M. and Ajanthan, Thalaiyasingam and Ramasinghe, Sameera and Hewa Koneputugodage, Chamin P. and Siriwardhana, Shamane and Shevchenko, Violetta and Pajak, Karol and Snewin, James and Avraham, Gil and Long, Alexander},
  journal = {arXiv preprint arXiv:2603.03597},
  year    = {2026},
  url     = {https://arxiv.org/abs/2603.03597},
  doi        = {10.48550/arXiv.2603.03597}
}

@article{zhang2026groupmuon,
  title   = {When and Why Grouping Attention Heads Accelerates {Muon} Optimization},
  author  = {Zhang, Hongtao and Zhou, Wenjie and Chen, Wei and Cheng, Xueqi},
  journal = {arXiv preprint arXiv:2605.08933},
  year    = {2026},
  url     = {https://arxiv.org/abs/2605.08933},
  doi        = {10.48550/arXiv.2605.08933}
}

@inproceedings{khaled2025muonbp,
  title     = {{MuonBP}: Faster {Muon} via Block-Periodic Orthogonalization},
  author    = {Khaled, Ahmed and Ozkara, Kaan and Yu, Tao and Hong, Mingyi and Park, Youngsuk},
  booktitle = {International Conference on Learning Representations},
  year      = {2026},
  url        = {https://proceedings.iclr.cc/paper_files/paper/2026/hash/1d485d337d5622bdb1462333f4247522-Abstract-Conference.html},
  pages      = {17389--17416}
}

@article{li2026intrinsic,
  title   = {Intrinsic {Muon}: Spectral Optimization on {Riemannian} Matrix Manifolds},
  author  = {Li, Yibang and Pandey, Bihari Lal and Sah, Ravi and Han, Andi and Mostajeran, Cyrus and Jawanpuria, Pratik and Mishra, Bamdev},
  journal = {arXiv preprint arXiv:2605.09238},
  year    = {2026},
  url     = {https://arxiv.org/abs/2605.09238},
  doi        = {10.48550/arXiv.2605.09238}
}

@article{bernstein2024oldoptimizer,
  title   = {Old Optimizer, New Norm: An Anthology},
  author  = {Bernstein, Jeremy and Newhouse, Laker},
  journal = {arXiv preprint arXiv:2409.20325},
  year    = {2024},
  url     = {https://arxiv.org/abs/2409.20325},
  doi        = {10.48550/arXiv.2409.20325}
}

@misc{jordan2024muon,
  title  = {{Muon}: An Optimizer for Hidden Layers in Neural Networks},
  author = {Jordan, Keller and Jin, Yuchen and Boza, Vlado and You, Jiacheng and Cesista, Franz and Newhouse, Laker and Bernstein, Jeremy},
  year   = {2024},
  note   = {Blog post},
  url    = {https://kellerjordan.github.io/posts/muon/}
}

@article{kernfeld2015tensor,
  title   = {Tensor--Tensor Products with Invertible Linear Transforms},
  author  = {Kernfeld, Eric and Kilmer, Misha E. and Aeron, Shuchin},
  journal = {Linear Algebra and its Applications},
  volume  = {485},
  pages   = {545--570},
  year    = {2015},
  doi     = {10.1016/j.laa.2015.07.021}
}

@article{kilmer2021tensor,
  title   = {Tensor--Tensor Algebra for Optimal Representation and Compression of Multiway Data},
  author  = {Kilmer, Misha E. and Horesh, Lior and Avron, Haim and Newman, Elizabeth},
  journal = {Proceedings of the National Academy of Sciences of the United States of America},
  volume  = {118},
  number  = {28},
  pages   = {e2015851118},
  year    = {2021},
  doi     = {10.1073/pnas.2015851118}
}

@article{liu2025muon,
  title   = {{Muon} Is Scalable for {LLM} Training},
  author  = {Liu, Jingyuan and Su, Jianlin and Yao, Xingcheng and Jiang, Zhejun and Lai, Guokun and Du, Yulun and Qin, Yidao and Xu, Weixin and Lu, Enzhe and Yan, Junjie and Chen, Yanru and Zheng, Huabin and Liu, Yibo and Liu, Shaowei and Yin, Bohong and He, Weiran and Zhu, Han and Wang, Yuzhi and Wang, Jianzhou and Dong, Mengnan and Zhang, Zheng and Kang, Yongsheng and Zhang, Hao and Xu, Xinran and Zhang, Yutao and Wu, Yuxin and Zhou, Xinyu and Yang, Zhilin},
  journal = {arXiv preprint arXiv:2502.16982},
  year    = {2025},
  url     = {https://arxiv.org/abs/2502.16982},
  doi        = {10.48550/arXiv.2502.16982}
}

@article{shah2025muon,
  title   = {Practical Efficiency of {Muon} for Pretraining},
  author  = {Shah, Ishaan and Polloreno, Anthony M. and Stratos, Karl and Monk, Philip and Chaluvaraju, Adarsh and Hojel, Andrew and Ma, Andrew and Thomas, Anil and Tanwer, Ashish and Shah, Darsh J. and Nguyen, Khoi and Smith, Kurt and Callahan, Michael and Pust, Michael and Parmar, Mohit and Rushton, Peter and Mazarakis, Platon and Kapila, Ritvik and Srivastava, Saurabh and Singla, Somanshu and Romanski, Tim and Vanjani, Yash and Vaswani, Ashish},
  journal = {arXiv preprint arXiv:2505.02222},
  year    = {2025},
  url     = {https://arxiv.org/abs/2505.02222},
  doi        = {10.48550/arXiv.2505.02222}
}

@article{zhang2026teon,
  title   = {{TEON}: Tensorized Orthonormalization Beyond Layer-Wise {Muon} for Large Language Model Pre-Training},
  author  = {Zhang, Ruijie and Zhao, Yequan and Liu, Ziyue and Wang, Zhengyang and Li, Dongyang and Su, Yupeng and Liu, Sijia and Zhang, Zheng},
  journal = {arXiv preprint arXiv:2601.23261},
  year    = {2026},
  url     = {https://arxiv.org/abs/2601.23261},
  doi        = {10.48550/arXiv.2601.23261}
}

@article{bogachev2026tensorion,
  title   = {{Tensorion}: A Tensor-Aware Generalization of the {Muon} Optimizer},
  author  = {Bogachev, Vladimir and Aletov, Vladimir and Molozhavenko, Alexander and Kudriashov, Sergei and Rakhuba, Maxim},
  journal = {arXiv preprint arXiv:2606.25975},
  year    = {2026},
  url     = {https://arxiv.org/abs/2606.25975},
  doi        = {10.48550/arXiv.2606.25975}
}

@article{qing2026muonc,
  title   = {{Muon-C}: Operator-Aligned {Muon} for Convolutional Kernels},
  author  = {Qing, Jiaxin and Li, Lexin},
  journal = {arXiv preprint arXiv:2609.09676},
  year    = {2026},
  url     = {https://arxiv.org/abs/2609.09676},
  doi        = {10.48550/arXiv.2609.09676}
}

@article{kovalev2025understanding,
  title   = {Understanding Gradient Orthogonalization for Deep Learning via Non-{Euclidean} Trust-Region Optimization},
  author  = {Kovalev, Dmitry},
  journal = {arXiv preprint arXiv:2503.12645},
  year    = {2025},
  url     = {https://arxiv.org/abs/2503.12645},
  doi        = {10.48550/arXiv.2503.12645}
}

@article{li2025note,
  title   = {A Note on the Convergence of {Muon}},
  author  = {Li, Jiaxiang and Hong, Mingyi},
  journal = {arXiv preprint arXiv:2502.02900},
  year    = {2025},
  url     = {https://arxiv.org/abs/2502.02900},
  doi        = {10.48550/arXiv.2502.02900}
}

@article{martucci1994symmetric,
  title   = {Symmetric Convolution and the Discrete Sine and Cosine Transforms},
  author  = {Martucci, Stephen A.},
  journal = {IEEE Transactions on Signal Processing},
  volume  = {42},
  number  = {5},
  pages   = {1038--1051},
  year    = {1994},
  url        = {https://ieeexplore.ieee.org/document/295213}
}

@article{touvron2023llama2,
  title = {{Llama 2}: Open Foundation and Fine-Tuned Chat Models},
  author = {Hugo Touvron and Louis Martin and Kevin Stone and Peter Albert and Amjad Almahairi and Yasmine Babaei and Nikolay Bashlykov and Soumya Batra and Prajjwal Bhargava and Shruti Bhosale and Dan Bikel and Lukas Blecher and Cristian Canton Ferrer and Moya Chen and Guillem Cucurull and David Esiobu and Jude Fernandes and Jeremy Fu and Wenyin Fu and Brian Fuller and Cynthia Gao and Vedanuj Goswami and Naman Goyal and Anthony Hartshorn and Saghar Hosseini and Rui Hou and Hakan Inan and Marcin Kardas and Viktor Kerkez and Madian Khabsa and Isabel Kloumann and Artem Korenev and Punit Singh Koura and Marie-Anne Lachaux and Thibaut Lavril and Jenya Lee and Diana Liskovich and Yinghai Lu and Yuning Mao and Xavier Martinet and Todor Mihaylov and Pushkar Mishra and Igor Molybog and Yixin Nie and Andrew Poulton and Jeremy Reizenstein and Rashi Rungta and Kalyan Saladi and Alan Schelten and Ruan Silva and Eric Michael Smith and Ranjan Subramanian and Xiaoqing Ellen Tan and Binh Tang and Ross Taylor and Adina Williams and Jian Xiang Kuan and Puxin Xu and Zheng Yan and Iliyan Zarov and Yuchen Zhang and Angela Fan and Melanie Kambadur and Sharan Narang and Aurelien Rodriguez and Robert Stojnic and Sergey Edunov and Thomas Scialom},
  journal = {arXiv preprint arXiv:2307.09288},
  year = {2023},
  doi = {10.48550/arXiv.2307.09288},
  url = {https://arxiv.org/abs/2307.09288}
}

@inproceedings{pang2026htmuon,
  title     = "{HTMuon}: Improving {Muon} via Heavy-Tailed Spectral Correction",
  author    = "Pang, Tianyu and
               Fang, Yujie and
               Liu, Zihang and
               Deng, Shenyang and
               Hsiung, Lei and
               Yu, Shuhua and
               Yang, Yaoqing",
  editor    = "Liakata, Maria and
               Moreira, Viviane P. and
               Zhang, Jiajun and
               Jurgens, David",
  booktitle = "Findings of the Association for Computational Linguistics: {ACL} 2026",
  year      = "2026",
  month     = jul,
  address   = "San Diego, California, United States",
  publisher = "Association for Computational Linguistics",
  pages     = "36504--36535",
  doi       = "10.18653/v1/2026.findings-acl.1819",
  url       = "https://aclanthology.org/2026.findings-acl.1819/"
}

\appendix

\section{The Unitary DFT t-SVD}
\label{app:dft-extension}

We briefly state the complex extension of t-SVD when unitary DFT is used as transformation. For $\ten X\in\mathbb{C}^{m\times n\times K}$, define
\[
\widehat{\ten X} = \ten X\times_3\mat M.
\]
Following the transform-based tensor algebra of \citet{kernfeld2015tensor,kilmer2021tensor}, the $\mat M$-adjoint $\ten X^{*_{\mat M}}$ is defined by
\[
\widehat{\left(\ten X^{*_{\mat M}}\right)}_k = \widehat{\mat X}_k^{*}, \qquad k\in[K].
\]
Equivalently,
\[
\ten X^{*_{\mat M}} = \widehat{\ten X}^{\,*} \times_3 \mat M^{*},
\]
where $\widehat{\ten X}^{\,*}$ denotes conjugate transposition of each transformed frontal slice.

Accordingly, the $\mat M$-product t-SVD takes the form
\[
\ten X
=
\ten U
\star_{\mat M}
\ten S
\star_{\mat M}
\ten V^{*_{\mat M}},
\]
with transformed frontal slices
\[
\widehat{\mat X}_k
=
\widehat{\mat U}_k
\widehat{\mat S}_k
\widehat{\mat V}_k^{*},
\qquad k\in[K],
\]
where each factorization is a complex matrix SVD.

\section{Proof of the Steepest-Descent Direction}
\label{app:steepest-descent-proof}

This appendix proves \Cref{prop:star-m-steepest-descent}. We first consider a real orthogonal transform and then extend the result to the unitary DFT used in the experiments.

\subsection{Real Orthogonal Transforms}
\label{app:real-steepest-descent-proof}

\begin{proof}[Proof of \Cref{prop:star-m-steepest-descent}]

\Cref{eq:star-m-steepest-descent} is equivalent to
\begin{equation}
    \Delta\ten{W}^{\star}
    \in-\eta\cdot 
    \operatorname*{arg\,max}_{
        \|\ten{D}\|_{\star_{\mat{M}}}\leq 1
    }
    \left\langle
        \ten{G},
        \ten{D}
    \right\rangle .
\end{equation}
Because $\mat{M}$ is orthogonal, mode-three multiplication by $\mat{M}$ preserves the Frobenius inner product:
\begin{equation}
\begin{aligned}
    \left\langle
        \ten{G},
        \ten{D}
    \right\rangle
    &=
    \left\langle
        \widehat{\ten{G}},
        \widehat{\ten{D}}
    \right\rangle
    \\
    &=
    \sum_{k=1}^{K}
    \left\langle
        \widehat{\mat{G}}_k,
        \widehat{\mat{D}}_k
    \right\rangle.
\end{aligned}
\label{eq:app-transform-inner-product}
\end{equation}

By the definition of the $\star_{\mat{M}}$-spectral norm,
\begin{equation}
    \left\|
        \ten{D}
    \right\|_{\star_{\mat{M}}}
    \leq 1\Longleftrightarrow
    \left\|
        \widehat{\mat{D}}_k
    \right\|_{\mathrm{spec}}
    \leq 1,
    \label{eq:app-slice-constraints}
\end{equation} for all $k\in[K]$.

The linear optimization problem therefore separates across the transformed frontal slices:
\begin{equation}
\begin{aligned}
    &
    \max_{
        \|\ten{D}\|_{\star_{\mat{M}}}\leq 1
    }
    \left\langle
        \ten{G},
        \ten{D}
    \right\rangle
    \\
    &\quad=
    \sum_{k=1}^{K}
    \max_{
        \|\widehat{\mat{D}}_k\|_{\mathrm{spec}}\leq 1
    }
    \left\langle
        \widehat{\mat{G}}_k,
        \widehat{\mat{D}}_k
    \right\rangle.
\end{aligned}
\label{eq:app-separated-problem}
\end{equation}

For each $k\in[K]$, let
\begin{equation*}
    \widehat{\mat{G}}_k
    =
    \mat{U}_k
    \mat{\Sigma}_k
    \mat{V}_k^{\top}
\end{equation*}
be a thin matrix SVD. By spectral--nuclear norm duality,
\begin{equation}
    \max_{
        \|\widehat{\mat{D}}_k\|_{\mathrm{spec}}\leq 1
    }
    \left\langle
        \widehat{\mat{G}}_k,
       \widehat{\mat{D}}_k
    \right\rangle
    =
    \left\|
        \widehat{\mat{G}}_k
    \right\|_{\mathrm{nuc}},
    \label{eq:app-spectral-nuclear-duality}
\end{equation}
and one maximizer is
\begin{equation}
   \widehat{\mat{D}}_k^{\star}
    =
    \mat{U}_k\mat{V}_k^{\top}
    =
    \operatorname{Orth}
    \left(
        \widehat{\mat{G}}_k
    \right).
    \label{eq:app-slice-polar-solution}
\end{equation}

Let $\widehat{\ten{O}}$ contain these maximizers as frontal slices:
\begin{equation*}
    \widehat{\mat{O}}_k
    :=
    \widehat{\mat{D}}_k^{\star}.
\end{equation*}
Applying the inverse transform gives
\begin{equation*}
    \ten{O}
    =
    \widehat{\ten{O}}\times_3\mat{M}^{\top}
    =
    \operatorname{Orth}_{\mat{M}}(\ten{G}).
\end{equation*}
Substituting this maximizer into \Cref{eq:star-m-steepest-descent} yields
\begin{equation*}
    \Delta\ten{W}^{\star}
    =
    -\eta \cdot \operatorname{Orth}_{\mat{M}}(\ten{G}),
\end{equation*}
which proves the proposition.
\end{proof}

\subsection{The Unitary DFT Case}
\label{app:dft-steepest-descent}

Unitary $\mat M$ cannot split \Cref{eq:app-separated-problem} independently into subproblems. However, the DFT case differs from a general complex unitary transform because a real tensor has conjugate-symmetric DFT slices. This structure allows the real-constrained problem to decompose over conjugate pairs.

For complex matrices, define the Frobenius inner product by
\begin{equation}
    \left\langle
        \mat{A},
        \mat{B}
    \right\rangle_{\mathbb{C}}
    :=
    \operatorname{tr}
    \left(
        \mat{A}^{*}\mat{B}
    \right).
    \label{eq:app-complex-inner-product}
\end{equation}

\begin{proposition}[Steepest Descent under the Unitary DFT]
\label{prop:app-dft-steepest-descent}
Let $\mat{M}\in\mathbb{C}^{K\times K}$ be the unitary DFT matrix, and consider  $\ten{G}\in\mathbb{R}^{m\times n\times K}$. Let
\begin{equation}
    \widehat{\mat{O}}_k
    :=
    \operatorname{Orth}
    \left(
         \widehat{\mat{G}}_k
    \right),
    \qquad k\in[K].
    \label{eq:app-dft-polar-slices}
\end{equation}
Then
\begin{equation}
    \ten{O}
    :=
    \widehat{\ten{O}}\times_3\mat{M}^{*}
    \in
    \mathbb{R}^{m\times n\times K},
    \label{eq:app-dft-real-polar}
\end{equation}
and
\begin{equation}
    \Delta\ten{W}^{\star}
    = -\eta\cdot\ten{O},
    \label{eq:app-dft-steepest-direction}
\end{equation}
solves \Cref{eq:star-m-steepest-descent} under the unitary DFT transformation.
\end{proposition}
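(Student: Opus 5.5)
Two claims need proof: the tensor $\ten{O}$ is real, and $-\eta\ten{O}$ solves \Cref{eq:star-m-steepest-descent}. Here the feasible set is real tensors $\ten{D}$ with $\max_k\|\widehat{\mat{D}}_k\|_{\mathrm{spec}}\le 1$, where $\widehat{\ten{D}}=\ten{D}\times_3\mat{M}$ for the unitary DFT. Index slices by $k\in\{0,\dots,K-1\}$ modulo $K$.

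\textbf{Conjugate symmetry.} For real $\ten{X}$, the transformed slices satisfy $\widehat{\mat{X}}_{-k}=\overline{\widehat{\mat{X}}_k}$. Conversely, a complex tensor $\widehat{\ten{Y}}$ with this symmetry has a real inverse transform $\widehat{\ten{Y}}\times_3\mat{M}^*$, since the map $\ten{X}\mapsto\widehat{\ten{X}}$ is a bijection from real tensors onto conjugate-symmetric ones.

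\textbf{Step 1: $\ten{O}$ is real.} If $\widehat{\mat{G}}_k=\mat{U}_k\mat{\Sigma}_k\mat{V}_k^*$, then $\overline{\widehat{\mat{G}}_k}=\overline{\mat{U}_k}\mat{\Sigma}_k\overline{\mat{V}_k}^{*}$ is an SVD of $\widehat{\mat{G}}_{-k}$. Because the polar factor $\operatorname{Orth}$ is independent of the chosen SVD, this gives $\widehat{\mat{O}}_{-k}=\overline{\widehat{\mat{O}}_k}$. In particular, for the self-conjugate indices $k=0$ and, when $K$ is even, $k=K/2$, the slices $\widehat{\mat{G}}_k$ are real, so the corresponding $\widehat{\mat{O}}_k$ are real. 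Hence $\widehat{\ten{O}}$ is conjugate symmetric and $\ten{O}$ is real.

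\textbf{Step 2: Parseval and the upper bound.} For real $\ten{G},\ten{D}$, unitarity of $\mat{M}$ gives
\[
\langle\ten{G},\ten{D}\rangle=\operatorname{Re}\sum_k\langle\widehat{\mat{G}}_k,\widehat{\mat{D}}_k\rangle_{\mathbb{C}}.
\]
The sum is in fact real. Each summand satisfies
\[
\operatorname{Re}\langle\widehat{\mat{G}}_k,\widehat{\mat{D}}_k\rangle_{\mathbb{C}}\le|\langle\widehat{\mat{G}}_k,\widehat{\mat{D}}_k\rangle_{\mathbb{C}}|\le\|\widehat{\mat{G}}_k\|_{\mathrm{nuc}}\|\widehat{\mat{D}}_k\|_{\mathrm{spec}}.
\]
This is complex spectral–nuclear duality, obtained from the SVD of $\widehat{\mat{G}}_k$. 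It yields the bound $\langle\ten{G},\ten{D}\rangle\le\sum_k\|\widehat{\mat{G}}_k\|_{\mathrm{nuc}}$ for every feasible $\ten{D}$.

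\textbf{Step 3: Attainment and feasibility.} The choice $\widehat{\mat{D}}_k=\widehat{\mat{O}}_k$ gives $\langle\widehat{\mat{G}}_k,\widehat{\mat{O}}_k\rangle_{\mathbb{C}}=\operatorname{tr}(\mat{\Sigma}_k)=\|\widehat{\mat{G}}_k\|_{\mathrm{nuc}}$, so the bound is attained. The polar factors satisfy $\|\widehat{\mat{O}}_k\|_{\mathrm{spec}}\le1$, and by Step 1 $\ten{O}$ is real, so $\ten{O}$ is feasible and is therefore a maximizer. Rescaling by $-\eta$ gives \Cref{eq:app-dft-steepest-direction}, as in the real-orthogonal proof.

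\textbf{Main obstacle.} The slicewise constraints are coupled: a real $\ten{D}$ cannot have its transformed slices chosen independently, because they must obey conjugate symmetry. This is the reason the separation argument of \Cref{app:real-steepest-descent-proof} fails for a unitary $\mathbf{M}$. The plan avoids the issue by proving the upper bound over all feasible $\ten{D}$ without using separability, and then checking that the unconstrained slicewise maximizer happens to satisfy the symmetry. The delicate part is the uniqueness of the polar factor under conjugation, including for rank-deficient slices. I would prove it from the characterization $\operatorname{Orth}(\mat{A})=\mat{A}(\mat{A}^*\mat{A})^{+1/2}$, where $(\cdot)^{+1/2}$ is the pseudo-inverse square root; this expression commutes with entrywise conjugation.
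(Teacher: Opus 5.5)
Your proof is correct and complete, but it is organized differently from the paper's.

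\textbf{The paper's route.} The paper decomposes the real-constrained problem. It splits the transformed slices into self-conjugate slices and conjugate pairs $(k,\bar k)$, and writes each pair's contribution as $2\operatorname{Re}\langle\widehat{\mat{G}}_k,\widehat{\mat{D}}_k\rangle_{\mathbb{C}}$ under a shared spectral constraint. It solves each piece by spectral--nuclear duality. It then sets $\widehat{\mat{O}}_{\bar k}:=\overline{\widehat{\mat{O}}_k}$, so that the assembled tensor is conjugate symmetric and its inverse DFT is real.

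\textbf{Your route.} You skip the pairing entirely. Your bound $\langle\ten{G},\ten{D}\rangle\le\sum_k\|\widehat{\mat{G}}_k\|_{\mathrm{nuc}}$ treats every slice as an independent complex matrix. It loses nothing relative to the paired bound, because for a pair $2\|\widehat{\mat{G}}_k\|_{\mathrm{nuc}}=\|\widehat{\mat{G}}_k\|_{\mathrm{nuc}}+\|\widehat{\mat{G}}_{\bar k}\|_{\mathrm{nuc}}$. The DFT structure then enters only through the feasibility check that $\ten{O}$ is real.

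\textbf{What each approach buys.} Your route gives a shorter argument. It also applies verbatim to any unitary $\mat{M}$ whose slicewise polar output inverse-transforms to a real tensor. It handles a point the paper passes over: the proposition defines every $\widehat{\mat{O}}_k$ as $\operatorname{Orth}(\widehat{\mat{G}}_k)$. The paper's ``choice'' of the paired slice is therefore consistent with the statement only because $\operatorname{Orth}(\overline{\mat{A}})=\overline{\operatorname{Orth}(\mat{A})}$, which you prove via $\mat{A}(\mat{A}^*\mat{A})^{+1/2}$. That uniqueness relies on the rank-truncated convention for $\operatorname{Orth}$ in the paper's Full-SVD remark. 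With a thin SVD that keeps zero singular values the factor is not unique, and one would have to pick conjugate-consistent factors, as the paper does. In exchange, the paper's route shows which transformed slices are actually free under the realness constraint. It also shows why slicewise separation, which fails for a general unitary $\mat{M}$, is recovered for the DFT.
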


\begin{proof}
Again, we consider the problem 
\begin{equation}
    \Delta\ten{W}^{\star}
    \in-\eta\cdot 
    \operatorname*{arg\,max}_{
        \|\ten{D}\|_{\star_{\mat{M}}}\leq 1
    }
    \left\langle
        \ten{G},
        \ten{D}
    \right\rangle .
\end{equation}Because $\mat{M}$ is unitary, it preserves the Frobenius inner product.
For any real update tensor $\ten{D}$,
\begin{equation}
\begin{aligned}
    \left\langle
        \ten{G},
        \ten{D}
    \right\rangle
    &=
    \sum_{k=1}^{K}
    \left\langle
         \widehat{\mat{G}}_k,
        \widehat{\mat{D}}_k
    \right\rangle_{\mathbb{C}}.
\end{aligned}
\label{eq:app-dft-inner-product}
\end{equation}
Although individual terms may be complex, their sum is real.

Since $\ten{G}$ and $\ten{D}$ are real, their transformed tensors are conjugate symmetric. The slice $k=1$ is self-conjugate. When $K$ is even, the slice $k=K/2+1$ is also self-conjugate. For non-self-conjugate indices, define
\begin{equation*}
    \bar{k}
    :=
    K-k+2.
\end{equation*}
Then
\begin{equation}
    \widehat{\mat{G}}_{\bar{k}}
    =
    \overline{
         \widehat{\mat{G}}_k
    },
    \qquad
    \widehat{ \mat{D}}_{\bar{k}}
    =
    \overline{
        \widehat{\mat{D}}_k
    }.
    \label{eq:app-dft-conjugate-symmetry}
\end{equation}

For a non-self-conjugate pair $(k,\bar{k})$, the contribution to \Cref{eq:app-dft-inner-product} is
\begin{equation}
\begin{aligned}
    &
    \left\langle
         \widehat{\mat{G}}_k,
        \widehat{\mat{D}}_k
    \right\rangle_{\mathbb{C}} +
    \left\langle
        \widehat{\mat{G}}_{\bar{k}},
        \widehat{ \mat{D}}_{\bar{k}}
    \right\rangle_{\mathbb{C}}
    \\
    &=
    2\operatorname{Re}
    \left\langle
         \widehat{\mat{G}}_k,
        \widehat{\mat{D}}_k
    \right\rangle_{\mathbb{C}}.
\end{aligned}
\label{eq:app-dft-pair-objective}
\end{equation}
The spectral-norm constraints of the two slices are identical. If
\begin{equation*}
     \widehat{\mat{G}}_k
    =
    \mat{U}_k
    \mat{\Sigma}_k
    \mat{V}_k^{*},
\end{equation*}
complex spectral--nuclear norm duality gives the maximizer
\begin{equation}
    \widehat{\mat{O}}_k
    =
    \mat{U}_k\mat{V}_k^{*}.
    \label{eq:app-dft-pair-polar}
\end{equation}
The paired slice is chosen as
\begin{equation}
   \widehat{\mat{O}}_{\bar{k}}
    =
    \overline{
        \widehat{\mat{O}}_k
    }.
    \label{eq:app-dft-paired-polar}
\end{equation}
For each self-conjugate slice, the transformed gradient is real, and the ordinary real matrix polar factor solves the corresponding subproblem.

The resulting tensor $\widehat{\ten{O}}$ is conjugate symmetric. Consequently, its inverse DFT in \Cref{eq:app-dft-real-polar} is real. Combining the independently optimal self-conjugate slices and conjugate pairs proves \Cref{eq:app-dft-steepest-direction}.
\end{proof}

\section{Proof of the Convergence Guarantee}
\label{app:convergence-proof}

This appendix proves \Cref{thm:convergence}. Define
\begin{equation}
    \chi_{\mu}
    :=
    (1-\mu)
    \sqrt{
        2\mu
        +
        \frac{1}{1-\mu^{2}}
    }.
    \label{eq:app-chi-mu}
\end{equation}
The proof first establishes a one-step descent inequality, then bounds the gradient and momentum tracking errors, and finally combines these estimates over the optimization trajectory.

\subsection{Auxiliary Lemmas}
\label{app:convergence-lemmas}

\begin{lemma}[One-Step Descent Bound]
\label{lem:app-descent}
Suppose that \Cref{ass:smoothness} holds. The update in \Cref{eq:convergence-update} satisfies
\begin{equation}
\begin{aligned}
    f(\ten{W}_{t+1}) &\leq f(\ten{W}_{t}) - \eta \left\| \nabla f(\ten{W}_{t}) \right\|_{*}\\
    &\quad+ 2\eta \left\|\nabla f(\ten{W}_{t})-(1-\mu)\ten{Y}_{t}\right\|_{*} + \frac{\beta\eta^{2}}{2}.
\end{aligned}
\label{eq:app-descent-bound}
\end{equation}
\end{lemma}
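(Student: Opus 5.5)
The plan is to combine the standard descent inequality for $\beta$-smooth functions with the defining property of the linear minimization oracle, and then swap the momentum-weighted direction $(1-\mu)\ten{Y}_t$ for the true gradient at a cost measured in the dual norm. We use the quadratic upper bound implied by \Cref{ass:smoothness}. For any $\ten{W},\ten{W}'$, integrating the gradient along the segment and applying the generalized Cauchy--Schwarz inequality $\langle \ten{A},\ten{B}\rangle\le \|\ten{A}\|_*\|\ten{B}\|$ gives
\[
f(\ten{W}')\le f(\ten{W})+\langle\nabla f(\ten{W}),\ten{W}'-\ten{W}\rangle+\tfrac{\beta}{2}\|\ten{W}'-\ten{W}\|^2 .
\]
Substituting $\ten{W}'=\ten{W}_{t+1}=\ten{W}_t-\eta\ten{D}_t$ and using $\|\ten{D}_t\|\le 1$ yields
\[
f(\ten{W}_{t+1})\le f(\ten{W}_t)-\eta\langle\nabla f(\ten{W}_t),\ten{D}_t\rangle+\tfrac{\beta\eta^2}{2}.
\]

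Next we lower-bound $\langle\nabla f(\ten{W}_t),\ten{D}_t\rangle$. Write $\ten{E}_t:=\nabla f(\ten{W}_t)-(1-\mu)\ten{Y}_t$. Since $1-\mu>0$, $\ten{D}_t$ is also a maximizer of $\langle(1-\mu)\ten{Y}_t,\ten{D}\rangle$ over the unit ball. By the definition of the dual norm,
\[
\langle(1-\mu)\ten{Y}_t,\ten{D}_t\rangle=\|(1-\mu)\ten{Y}_t\|_*\ge\|\nabla f(\ten{W}_t)\|_*-\|\ten{E}_t\|_*,
\]
where the inequality is the triangle inequality. Decomposing the gradient and applying Cauchy--Schwarz again with $\|\ten{D}_t\|\le1$ gives
\[
\langle\nabla f(\ten{W}_t),\ten{D}_t\rangle=\langle(1-\mu)\ten{Y}_t,\ten{D}_t\rangle+\langle\ten{E}_t,\ten{D}_t\rangle\ge\|\nabla f(\ten{W}_t)\|_*-2\|\ten{E}_t\|_*.
\]
Plugging this into the descent inequality gives exactly \Cref{eq:app-descent-bound}.

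The only step that needs some care is the first one: justifying the quadratic upper bound for a general norm, or equivalently recognizing that the dual-norm Lipschitz condition suffices through the pairing inequality. It is standard, and we would state it with a one-line integral argument. The rest is bookkeeping. In particular, rescaling by $(1-\mu)$ does not change the argmax, which is why the error term is taken against $(1-\mu)\ten{Y}_t$.
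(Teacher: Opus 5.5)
Your proposal is correct and matches the paper's proof step for step. Both use the smoothness quadratic bound with $\|\ten{D}_t\|\le 1$, the fact that rescaling by $1-\mu>0$ leaves the oracle's maximizer unchanged so that $\langle(1-\mu)\ten{Y}_t,\ten{D}_t\rangle=\|(1-\mu)\ten{Y}_t\|_*$, and then one pairing bound plus one triangle inequality on $\ten{E}_t$, which together give the factor $2$; your only addition is the one-line integral justification of the quadratic upper bound, which the paper simply cites as following ``by smoothness.''
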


\begin{proof}
By smoothness and $\ten{W}_{t+1}=\ten{W}_{t}-\eta\ten{D}_{t}$,
\begin{equation}
\begin{aligned}
    f(\ten{W}_{t+1}) &\leq f(\ten{W}_{t}) - \eta \left\langle \nabla f(\ten{W}_{t}), \ten{D}_{t} \right\rangle \\
    &\quad+ \frac{\beta\eta^{2}}{2} \left\| \ten{D}_{t}\right\|^{2}.
\end{aligned}
\label{eq:app-smoothness-step}
\end{equation}
Since $\|\ten{D}_{t}\|\leq 1$, the last term is at most $\beta\eta^{2}/2$.

Multiplying $\ten{Y}_t$ by the positive scalar $(1-\mu)$ does not change the maximizers of the linear oracle. Therefore,
\begin{equation}
    \left\langle
        (1-\mu)\ten{Y}_{t},
        \ten{D}_{t}
    \right\rangle
    =
    \left\|
        (1-\mu)\ten{Y}_{t}
    \right\|_{*}.
    \label{eq:app-lmo-optimality}
\end{equation}
Define the tracking error
\begin{equation}
    \ten{E}_{t}
    :=
    \nabla f(\ten{W}_{t})
    -
    (1-\mu)\ten{Y}_{t}.
    \label{eq:app-tracking-error}
\end{equation}
Then
\begin{equation}
\begin{aligned}
    &-
   \left\langle
        \nabla f(\ten{W}_{t}),
        \ten{D}_{t}
    \right\rangle\\
    =&
    -
    \left\|
        (1-\mu)\ten{Y}_{t}
    \right\|_{*}
    -
    \left\langle
        \ten{E}_{t},
        \ten{D}_{t}
    \right\rangle
    \\
    \leq&
    -
    \left\|
        (1-\mu)\ten{Y}_{t}
    \right\|_{*}
    +
    \|\ten{E}_{t}\|_{*}.
\end{aligned}
\label{eq:app-error-decomposition}
\end{equation}
The triangle inequality also gives
\begin{equation}
    \left\|
        \nabla f(\ten{W}_{t})
    \right\|_{*}
    \leq
    \left\|
        (1-\mu)\ten{Y}_{t}
    \right\|_{*}
    +
    \|\ten{E}_{t}\|_{*}.
    \label{eq:app-gradient-triangle}
\end{equation}
Combining \Cref{eq:app-error-decomposition,eq:app-gradient-triangle} yields
\begin{equation}
\begin{aligned}
    -
    \left\langle
        \nabla f(\ten{W}_{t}),
        \ten{D}_{t}
    \right\rangle
    &\leq
    -
    \left\|
        \nabla f(\ten{W}_{t})
    \right\|_{*}
    +
    2\|\ten{E}_{t}\|_{*}.
\end{aligned}
\end{equation}
Substituting this inequality into \Cref{eq:app-smoothness-step} proves the lemma.
\end{proof}

\begin{lemma}[Dual-Gradient Bound]
\label{lem:app-gradient-bound}
Suppose that \Cref{ass:smoothness} holds and that $f$ is bounded below by $f^{\star}$. Then every $\ten{W}\in\mathbb{R}^{m\times n\times K}$ satisfies
\begin{equation}
    \left\|
        \nabla f(\ten{W})
    \right\|_{*}^{2}
    \leq
    2\beta
    \left(
        f(\ten{W})-f^{\star}
    \right).
    \label{eq:app-gradient-bound}
\end{equation}
\end{lemma}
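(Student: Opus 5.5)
We plan to use the standard "smoothness implies gradient dominance by suboptimality" argument: take one step from $\ten{W}$ in the direction steepest for $\|\cdot\|$ with a tuned length. Then use the lower bound $f^{\star}$ to turn the resulting decrease into a bound on $\|\nabla f(\ten{W})\|_*$.

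\textbf{Step 1 (quadratic upper bound).} From \Cref{ass:smoothness} we first derive, for all $\ten{W},\ten{W}'$,
\[
f(\ten{W}')\le f(\ten{W})+\langle\nabla f(\ten{W}),\ten{W}'-\ten{W}\rangle+\tfrac{\beta}{2}\|\ten{W}'-\ten{W}\|^{2}.
\]
This is the same inequality already used in the proof of \Cref{lem:app-descent}. To derive it, write $f(\ten{W}')-f(\ten{W})$ as $\int_0^1\langle\nabla f(\ten{W}+s(\ten{W}'-\ten{W})),\ten{W}'-\ten{W}\rangle\,ds$. Subtract $\langle\nabla f(\ten{W}),\ten{W}'-\ten{W}\rangle$ and apply the generalized Cauchy--Schwarz inequality $\langle\ten{A},\ten{B}\rangle\le\|\ten{A}\|_*\|\ten{B}\|$. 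Then use the Lipschitz bound on the dual norm of the gradient difference and integrate $\beta s\|\ten{W}'-\ten{W}\|^2$.

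\textbf{Step 2 (test step).} Let $\ten{D}\in\arg\max_{\|\ten{D}\|\le1}\langle\nabla f(\ten{W}),\ten{D}\rangle$. The ball is compact and the objective linear, so this exists. By definition of the dual norm, $\langle\nabla f(\ten{W}),\ten{D}\rangle=\|\nabla f(\ten{W})\|_*$. Set $\ten{W}'=\ten{W}-\alpha\ten{D}$ with $\alpha\ge0$. Step 1 together with $\|\ten{D}\|\le1$ gives
\[
f^{\star}\le f(\ten{W}')\le f(\ten{W})-\alpha\|\nabla f(\ten{W})\|_*+\tfrac{\beta\alpha^2}{2}.
\]

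\textbf{Step 3 (optimize).} Choose $\alpha=\|\nabla f(\ten{W})\|_*/\beta$, which minimizes the right-hand side. This yields $f^{\star}\le f(\ten{W})-\|\nabla f(\ten{W})\|_*^2/(2\beta)$, and rearranging gives \Cref{eq:app-gradient-bound}.

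None of the steps is a real obstacle. The only point needing care is in Step 1: since $\|\cdot\|$ is an arbitrary norm rather than the Frobenius norm, the descent lemma must be justified with the dual pairing, not Euclidean Cauchy--Schwarz. This is routine once $\|\cdot\|_*$ is defined as the dual of $\|\cdot\|$ with respect to the Frobenius inner product.
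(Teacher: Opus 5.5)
Your proposal is correct and matches the paper's proof. The paper likewise sets $\ten{W}^{+}=\ten{W}-\frac{\|\nabla f(\ten{W})\|_{*}}{\beta}\ten{P}^{\star}$, with $\ten{P}^{\star}$ maximizing $\langle\nabla f(\ten{W}),\cdot\rangle$ over the unit ball, then applies the smoothness upper bound and uses $f(\ten{W}^{+})\ge f^{\star}$; your Step 1 simply derives, via the dual pairing, the descent inequality that the paper takes directly from \Cref{ass:smoothness}.
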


\begin{proof}
Let
\begin{equation}
    \ten{P}^{\star}
    \in
    \operatorname*{arg\,max}_{\|\ten{P}\|\leq 1}
    \left\langle
        \nabla f(\ten{W}),
        \ten{P}
    \right\rangle.
\end{equation}
Define
\begin{equation}
    \ten{W}^{+}
    :=
    \ten{W}
    -
    \frac{
        \|\nabla f(\ten{W})\|_{*}
    }{\beta}
    \ten{P}^{\star}.
    \label{eq:app-gradient-comparison-point}
\end{equation}
By smoothness,
\begin{equation}
\begin{aligned}
    f(\ten{W}^{+})
    \leq&
    f(\ten{W})
    +
    \left\langle
        \nabla f(\ten{W}),
        \ten{W}^{+}-\ten{W}
    \right\rangle
    \\&+
    \frac{\beta}{2}
    \left\|
        \ten{W}^{+}-\ten{W}
    \right\|^{2}
    \\
    \leq&
    f(\ten{W})
    -
    \frac{1}{2\beta}
    \left\|
        \nabla f(\ten{W})
    \right\|_{*}^{2}.
\end{aligned}
\label{eq:app-gradient-comparison}
\end{equation}
Since $f(\ten{W}^{+})\geq f^{\star}$, rearranging \Cref{eq:app-gradient-comparison} proves the result.
\end{proof}

\begin{lemma}[Momentum Tracking Error]
\label{lem:app-tracking-bound}
Suppose that \Cref{ass:smoothness,ass:stochastic-gradient,ass:norm-equivalence} hold. Then
\begin{equation}
\begin{aligned}
    \mathbb{E}&
    \left[
        \left\|
            \nabla f(\ten{W}_{t})
            -
            (1-\mu)\ten{Y}_{t}
        \right\|_{*}
    \right]\\
    &\leq
    \mu^{t+2}
    \sqrt{
        2\beta\Delta_{0}
    } +
    \frac{
        \beta\eta\mu^{2}
    }{
        1-\mu
    }
    +
    \rho\sigma\chi_{\mu}.
\end{aligned}
\label{eq:app-tracking-bound}
\end{equation}
\end{lemma}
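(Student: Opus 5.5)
The plan is to write $(1-\mu)\ten{Y}_t$ explicitly as a weighted sum of past stochastic gradients, and then split the tracking error into three pieces: a bias from the weights not summing to one, a drift from smoothness, and a martingale noise term. Unrolling \Cref{eq:convergence-momentum} with $\ten{B}_{-1}=\mathbf{0}$ gives $\ten{B}_t=\sum_{s=0}^{t}\mu^{t-s}\ten{G}_s$. Hence
\[
(1-\mu)\ten{Y}_t=\sum_{s=0}^{t}a_s\ten{G}_s,\qquad a_t=(1-\mu)(1+\mu),\quad a_s=(1-\mu)\mu^{t-s+1}\ (s<t).
\]
A geometric sum shows $\sum_{s}a_s=1-\mu^{t+2}$. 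Writing $\xi_s:=\ten{G}_s-\nabla f(\ten{W}_s)$, we then have
\[
\nabla f(\ten{W}_t)-(1-\mu)\ten{Y}_t=\sum_{s}a_s\bigl(\nabla f(\ten{W}_t)-\nabla f(\ten{W}_s)\bigr)-\sum_s a_s\xi_s+\mu^{t+2}\nabla f(\ten{W}_t).
\]

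\emph{Bias and drift.} I will not bound $\mu^{t+2}\|\nabla f(\ten{W}_t)\|_*$ directly, since $f(\ten{W}_t)-f^\star$ need not be controlled by $\Delta_0$. Instead, write $\nabla f(\ten{W}_t)=\nabla f(\ten{W}_0)+(\nabla f(\ten{W}_t)-\nabla f(\ten{W}_0))$. \Cref{lem:app-gradient-bound} bounds the first part by $\sqrt{2\beta\Delta_0}$. For the rest, \Cref{ass:smoothness} together with $\|\ten{W}_t-\ten{W}_s\|\le\eta(t-s)$ (since $\|\ten{D}_r\|\le1$) gives $\|\nabla f(\ten{W}_t)-\nabla f(\ten{W}_s)\|_*\le\beta\eta(t-s)$. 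The drift contributions therefore total
\[
\beta\eta\Bigl[(1-\mu)\sum_{j=1}^{t}j\mu^{j+1}+t\mu^{t+2}\Bigr].
\]
Using $(1-\mu)\sum_{j=1}^t j\mu^{j-1}=\frac{1-\mu^t}{1-\mu}-t\mu^t$, the bracket equals $\mu^2\frac{1-\mu^t}{1-\mu}\le\frac{\mu^2}{1-\mu}$. This yields the first two terms of \Cref{eq:app-tracking-bound}.

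\emph{Noise.} By \Cref{ass:norm-equivalence} and Jensen,
\[
\mathbb{E}\Bigl\|\sum_s a_s\xi_s\Bigr\|_*\le\rho\Bigl(\mathbb{E}\Bigl\|\sum_s a_s\xi_s\Bigr\|_{\mathrm F}^2\Bigr)^{1/2}.
\]
Let $\mathcal{F}_{s-1}$ be the $\sigma$-algebra generated by the first $s$ samples, $\zeta_0,\dots,\zeta_{s-1}$. Then $\ten{W}_s$ is $\mathcal{F}_{s-1}$-measurable, and \Cref{ass:stochastic-gradient} gives $\mathbb{E}[\xi_s\mid\mathcal{F}_{s-1}]=0$ with conditional variance at most $\sigma^2$. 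The cross terms vanish, so the second moment is at most $\sigma^2\sum_s a_s^2$. Computing,
\[
\sum_s a_s^2\le(1-\mu)^2\Bigl[(1+\mu)^2+\tfrac{\mu^4}{1-\mu^2}\Bigr]=(1-\mu)^2\Bigl[2\mu+\tfrac{1}{1-\mu^2}\Bigr]=\chi_\mu^2,
\]
using $\frac{1}{1-\mu^2}=1+\mu^2+\frac{\mu^4}{1-\mu^2}$. This gives the $\rho\sigma\chi_\mu$ term.

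Combining the three pieces with the triangle inequality and taking expectations proves the lemma. The main obstacle is the bias term. The naive route through $\|\nabla f(\ten{W}_t)\|_*$ would produce $f(\ten{W}_t)-f^\star$ rather than $\Delta_0$, and the fix only works because the extra $t\mu^{t+2}\beta\eta$ drift telescopes exactly with the geometric drift sum into $\mu^2/(1-\mu)$. I would therefore check that identity carefully.
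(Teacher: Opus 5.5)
Your proof is correct and is essentially the paper's argument. The paper derives the recursion $\ten{E}_t=\mu\ten{E}_{t-1}+\mu^2\ten{R}_t+(1-\mu^2)\ten{Z}_t-\mu(1-\mu)\ten{Z}_{t-1}$ and unrolls it into your three pieces: the term $\mu^{t+2}\nabla f(\ten{W}_0)$, a drift term, and a noise term with exactly your weights $a_s$. These are bounded with the dual-gradient bound, smoothness, and the same second-moment computation giving $\chi_\mu^2$. The only difference is bookkeeping: written in one-step differences $\ten{R}_i=\nabla f(\ten{W}_i)-\nabla f(\ten{W}_{i-1})$, your drift puts coefficient $\mu^{t-i+2}$ on each $\ten{R}_i$, so the arithmetico-geometric identity you checked by hand comes out automatically from the recursion, and both routes give the drift bound $\beta\eta\mu^2(1-\mu^t)/(1-\mu)\le\beta\eta\mu^2/(1-\mu)$.
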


\begin{proof}
Define the stochastic-gradient noise
\begin{equation}
    \ten{Z}_{t}
    :=
    \nabla f(\ten{W}_{t})
    -
    \ten{G}_{t},
    \label{eq:app-noise-definition}
\end{equation}
and, for $t\geq1$, define
\begin{equation}
    \ten{R}_{t}
    :=
    \nabla f(\ten{W}_{t})
    -
    \nabla f(\ten{W}_{t-1}).
    \label{eq:app-gradient-difference}
\end{equation}
Recall the tracking error
\begin{equation*}
    \ten{E}_{t}
    =
    \nabla f(\ten{W}_{t})
    -
    (1-\mu)\ten{Y}_{t}.
\end{equation*}

Using $\ten{Y}_{t}=\ten{G}_{t}+\mu\ten{B}_{t}$ and $\ten{B}_{t}=\mu\ten{B}_{t-1}+\ten{G}_{t}$ gives
\begin{equation}
    \ten{Y}_{t}
    =
    \mu\ten{Y}_{t-1}
    +
    (1+\mu)\ten{G}_{t}
    -
    \mu\ten{G}_{t-1}.
    \label{eq:app-direction-recursion}
\end{equation}
Substituting $\ten{G}_{t}=\nabla f(\ten{W}_{t})-\ten{Z}_{t}$ into \Cref{eq:app-direction-recursion} yields
\begin{equation}
\begin{aligned}
    \ten{E}_{t}
    =&\,
    \mu\ten{E}_{t-1}
    +
    \mu^{2}\ten{R}_{t}
    +
    (1-\mu^{2})\ten{Z}_{t}\\
    &-
    \mu(1-\mu)\ten{Z}_{t-1}.
\end{aligned}
\label{eq:app-error-recursion}
\end{equation}

Since $\ten{B}_{-1}=\mathbf{0}$,
\begin{equation}
    \ten{E}_{0}
    =
    \mu^{2}
    \nabla f(\ten{W}_{0})
    +
    (1-\mu^{2})\ten{Z}_{0}.
    \label{eq:app-initial-error}
\end{equation}
Unrolling \Cref{eq:app-error-recursion} gives
\begin{equation}
\begin{aligned}
    \ten{E}_{t}
    &=
    \mu^{t+2}
    \nabla f(\ten{W}_{0})
    +
    \mu^{2}
    \sum_{i=1}^{t}
    \mu^{t-i}
    \ten{R}_{i}
    +
    \ten{N}_{t},
\end{aligned}
\label{eq:app-error-unrolled}
\end{equation}
where
\begin{equation}
    \ten{N}_{t}
    :=
    (1-\mu^{2})\ten{Z}_{t}
    +
    (1-\mu)
    \sum_{i=0}^{t-1}
    \mu^{t-i+1}\ten{Z}_{i}.
    \label{eq:app-noise-aggregate}
\end{equation}

By \Cref{lem:app-gradient-bound},
\begin{equation}
    \left\|
        \mu^{t+2}
        \nabla f(\ten{W}_{0})
    \right\|_{*}
    \leq
    \mu^{t+2}
    \sqrt{
        2\beta\Delta_{0}
    }.
    \label{eq:app-initial-gradient-term}
\end{equation}
Smoothness and $\|\ten{W}_{i}-\ten{W}_{i-1}\|\leq\eta$ imply
\begin{equation}
    \|\ten{R}_{i}\|_{*}
    \leq
    \beta\eta.
\end{equation}
Therefore,
\begin{equation}
\begin{aligned}
    \left\|
        \mu^{2}
        \sum_{i=1}^{t}
        \mu^{t-i}\ten{R}_{i}
    \right\|_{*}
    &\leq
    \frac{
        \beta\eta\mu^{2}
    }{
        1-\mu
    }.
\end{aligned}
\label{eq:app-gradient-difference-bound}
\end{equation}

Under the independent mini-batch sampling and conditional unbiasedness in the stochastic-gradient model, distinct noise terms have zero expected cross inner product. Using \Cref{ass:stochastic-gradient},
\begin{equation}
\begin{aligned}
    &\mathbb{E}
    \left[
        \|\ten{N}_{t}\|_{\mathrm{F}}^{2}
    \right]\\
    &\leq
    \sigma^{2}
    \left[
        (1-\mu^{2})^{2}
        +
        (1-\mu)^{2}
        \sum_{i=0}^{t-1}
        \mu^{2(t-i+1)}
    \right]
    \\
    &\leq
    \sigma^{2}
    (1-\mu)^{2}
    \left(
        2\mu
        +
        \frac{1}{1-\mu^{2}}
    \right)
    \\
    &=
    \sigma^{2}\chi_{\mu}^{2}.
\end{aligned}
\label{eq:app-noise-second-moment}
\end{equation}
By Jensen's inequality and \Cref{ass:norm-equivalence}, 
\begin{equation}
\begin{aligned}
    \mathbb{E}
    \left[
        \|\ten{N}_{t}\|_{*}
    \right]
    &\leq
    \rho
    \sqrt{
        \mathbb{E}
        \left[
            \|\ten{N}_{t}\|_{\mathrm{F}}^{2}
        \right]
    }
    \\
    &\leq
    \rho\sigma\chi_{\mu}.
\end{aligned}
\label{eq:app-noise-dual-bound}
\end{equation}
Applying the triangle inequality to
\Cref{eq:app-error-unrolled} and combining
\Cref{eq:app-initial-gradient-term,eq:app-gradient-difference-bound,eq:app-noise-dual-bound}
proves the lemma.
\end{proof}

\subsection{Proof of \Cref{thm:convergence}}
\label{app:proof-convergence-theorem}

\begin{proof}[Proof of \Cref{thm:convergence}]
Taking expectations in
\Cref{eq:app-descent-bound} gives
\begin{equation}
\begin{aligned}
    \mathbb{E}
    \left[
        f(\ten{W}_{t+1})
    \right]
    &\leq
    \mathbb{E}
    \left[
        f(\ten{W}_{t})
    \right]
    -
    \eta
    \mathbb{E}
    \left[
        \|\nabla f(\ten{W}_{t})\|_{*}
    \right]
    \\
    &\quad+
    2\eta
    \mathbb{E}
    \left[
        \|\ten{E}_{t}\|_{*}
    \right]
    +
    \frac{\beta\eta^{2}}{2}.
\end{aligned}
\label{eq:app-expected-descent}
\end{equation}
Applying \Cref{lem:app-tracking-bound} and rearranging yields
\begin{equation}
\begin{aligned}
    \mathbb{E}
    \left[
        \|\nabla f(\ten{W}_{t})\|_{*}
    \right]
    &\leq
    \frac{
        \mathbb{E}[f(\ten{W}_{t})]
        -
        \mathbb{E}[f(\ten{W}_{t+1})]
    }{\eta}
    \\
    &\quad+
    2\mu^{t+2}
    \sqrt{
        2\beta\Delta_{0}
    }
    +
    \frac{
        2\beta\eta\mu^{2}
    }{
        1-\mu
    }
    \\
    &\quad+
    2\rho\sigma\chi_{\mu}
    +
    \frac{\beta\eta}{2}.
\end{aligned}
\label{eq:app-per-iteration-gradient}
\end{equation}

Summing \Cref{eq:app-per-iteration-gradient} from $t=0$ to $T_{\mathrm{iter}}-1$, the objective differences telescope:
\begin{equation}
\begin{aligned}
    &
    \sum_{t=0}^{T_{\mathrm{iter}}-1}
    \left(
        \mathbb{E}[f(\ten{W}_{t})]
        -
        \mathbb{E}[f(\ten{W}_{t+1})]
    \right)
    \\
    &=
    f(\ten{W}_{0})
    -
    \mathbb{E}
    \left[
        f(\ten{W}_{T_{\mathrm{iter}}})
    \right]
    \leq
    \Delta_{0}.
\end{aligned}
\label{eq:app-objective-telescope}
\end{equation}
Moreover,
\begin{equation}
    \sum_{t=0}^{T_{\mathrm{iter}}-1}
    \mu^{t+2}
    \leq
    \frac{\mu^{2}}{1-\mu}.
    \label{eq:app-momentum-geometric-sum}
\end{equation}
Consequently,
\begin{equation}
\begin{aligned}
    &
    \frac{1}{T_{\mathrm{iter}}}
    \sum_{t=0}^{T_{\mathrm{iter}}-1}
    \mathbb{E}
    \left[
        \|\nabla f(\ten{W}_{t})\|_{*}
    \right]
    \\
    &\leq
    \frac{
        \Delta_{0}
    }{
        \eta T_{\mathrm{iter}}
    }
    +
    \frac{
        2\mu^{2}
        \sqrt{
            2\beta\Delta_{0}
        }
    }{
        (1-\mu)T_{\mathrm{iter}}
    }
    \\
    &\quad+
    \frac{
        2\beta\eta\mu^{2}
    }{
        1-\mu
    }
    +
    2\rho\sigma\chi_{\mu}
    +
    \frac{\beta\eta}{2}.
\end{aligned}
\label{eq:app-average-gradient-bound}
\end{equation}
Finally, for every realization,
\begin{equation}
    \min_{0\leq t<T_{\mathrm{iter}}}
    \|\nabla f(\ten{W}_{t})\|_{*}
    \leq
    \frac{1}{T_{\mathrm{iter}}}
    \sum_{t=0}^{T_{\mathrm{iter}}-1}
    \|\nabla f(\ten{W}_{t})\|_{*}.
\end{equation}
Taking expectations and applying
\Cref{eq:app-average-gradient-bound} proves
\Cref{eq:convergence-bound}.
\end{proof}

\section{Model Architectures and Training Hyper-parameters}
\label{app:experimental-details}

\subsection{Model Configurations and Parameter Counts}
\label{app:model-configurations}

\Cref{tab:model-configurations} summarizes the architectures of the eight decoder-only models used in the experiments.

\begin{table*}[t]
    \centering
    \small
    \setlength{\tabcolsep}{6pt}
    \begin{tabular}{llrrrrrr}
        \toprule
        \textbf{Model}
        & $L$
        & $d$
        & $H$
        & $d_{\mathrm{ff}}$
        & \textbf{Params. (M)} \\
        \midrule
        GPT-50M
        & 8
        & 512
        & 8
        & 2,048
        & 51.475 \\
        GPT-Small
        & 12
        & 768
        & 12
        & 3,072
        & 124.439 \\
        GPT-Base
        & 24
        & 1,024
        & 16
        & 4,096
        & 354.822 \\
        GPT-Large
        & 36
        & 1,280
        & 20
        & 5,120
        & 774.029 \\
        \midrule
        LLaMA-60M
        & 8
        & 512
        & 8
        & 1,376
        & 58.074 \\
        LLaMA-130M
        & 12
        & 768
        & 12
        & 2,048
        & 134.106 \\
        LLaMA-350M
        & 24
        & 1,024
        & 16
        & 2,736
        & 367.969 \\
        LLaMA-1B
        & 24
        & 2,048
        & 32
        & 5,461
        & 1,339.083 \\
        \bottomrule
    \end{tabular}
    \caption{
        Architecture configurations used of GPT and LLaMA style models. Here, $L$, $d$, $H$, and $d_{\mathrm{ff}}$ denote the number of layers, hidden dimension, number of heads, and MLP intermediate dimension, respectively.
    }
    \label{tab:model-configurations}
\end{table*}

\subsection{Training and Optimizer Hyperparameters}
\label{app:training-hyperparameters}

Both the FineWeb and FineWeb-Edu experiments use a context length of 1,024, a batch size of 512, and bfloat16 mixed precision; the shared training hyperparameters are listed in \Cref{tab:main-hyperparameters}. Unless otherwise stated, runs use seeds 42, 43, and 44. The FineWeb-Edu experiments (\Cref{sec:fineweb-edu}) use the single seed 42 and report the final loss on a held-out validation split of 100M tokens. Each run uses four NVIDIA A100 GPUs, except the LLaMA-1B runs, which use 32 GPUs with the same global batch size.

\begin{table*}[t]
    \centering
    \small
    \setlength{\tabcolsep}{6pt}
    \begin{tabular}{p{0.3\textwidth}p{0.3\textwidth}}
        \toprule
        \textbf{Hyperparameter} &  \\
        \midrule
        Context length
        & 1,024 \\
        Batch size
        & 512 \\
        LR schedule
        & Cosine \\
        Warmup ratio
        & $0.1$ \\
        Weight decay
        & $0.1$ \\
        Polar approximation
        & PolarExpress \citep{amsel2025polar} \\
        Number of polar iterations
        & $5$ \\
        \bottomrule
    \end{tabular}
    \caption{
        Training hyperparameters used in the main experiments.
    }
    \label{tab:main-hyperparameters}
\end{table*}

\section{Additional Experimental Results}
\label{app:additional-results}

This appendix reports the complete optimizer comparisons underlying the compact tables in the main text.

\Cref{tab:gpt-ppl-full} extends \Cref{tab:gpt-ppl-main} to all fourteen \textsc{t-Muon} variants on the four GPT-2-style models.

\begin{table*}[t]
    \centering
    \footnotesize
    \setlength{\tabcolsep}{4.3pt}
    \begin{tabular}{lcccc}
        \toprule
        \textbf{Optimizer}
        & \textbf{GPT-50M}
        & \textbf{GPT-Small}
        & \textbf{GPT-Base}
        & \textbf{GPT-Large} \\
        \midrule
        AdamW
        & 47.35$\pm$0.081 (0.005)
        & 42.62$\pm$0.083 (0.005)
        & 38.64$\pm$0.081 (0.001)
        & 38.06$\pm$0.121 (0.001) \\
        Muon
        & 42.24$\pm$0.118 (0.02)
        & 35.48$\pm$0.052 (0.01)
        & 30.35$\pm$0.024 (0.01)
        & 27.95$\pm$0.046 (0.02) \\
        \midrule
        \multicolumn{5}{l}{\textit{\textsc{t-Muon} (proposed)}} \\
        Head-$F$
        & 42.00$\pm$0.079 (0.01)
        & 35.23$\pm$0.064 (0.01)
        & 30.41$\pm$0.048 (0.01)
        & 28.30$\pm$0.009 (0.01) \\
        Head-$C$
        & 41.73$\pm$0.036 (0.01)
        & 35.05$\pm$0.018 (0.01)
        & 30.30$\pm$0.019 (0.01)
        & 28.21$\pm$0.063 (0.01) \\
        Head-$S$
        & 41.83$\pm$0.019 (0.01)
        & 35.09$\pm$0.051 (0.01)
        & 30.34$\pm$0.024 (0.01)
        & 28.22$\pm$0.023 (0.01)\\
        Head-$I$
        & 40.57$\pm$0.100 (0.02)
        & 33.89$\pm$0.079 (0.02)
        & 29.07$\pm$0.018 (0.02)
        & \textbf{26.96$\pm$0.020} (0.02) \\
        Head-stack-$F$
        & 43.19$\pm$0.121 (0.02)
        & 36.28$\pm$0.145 (0.01)
        & 31.01$\pm$0.105 (0.04)
        & 28.75$\pm$0.038 (0.01) \\
        Head-stack-$C$
        & 43.11$\pm$0.153 (0.01)
        & 36.17$\pm$0.165 (0.02)
        & 30.80$\pm$0.179 (0.04)
        & 28.76$\pm$0.075 (0.01) \\
        Head-stack-$S$
        & 43.13$\pm$0.173 (0.02)
        & 36.15$\pm$0.073 (0.02)
        & 30.70$\pm$0.290 (0.04)
        & 28.73$\pm$0.062 (0.01)\\
        Head-stack-$I$
        & \textbf{40.52$\pm$0.069} (0.02)
        & \textbf{33.77$\pm$0.023} (0.02)
        & \textbf{29.07$\pm$0.035} (0.02)
        & 26.97$\pm$0.043 (0.02) \\
        Layer-$F$
        & 41.57$\pm$0.041 (0.01)
        & 34.68$\pm$0.031 (0.01)
        & 29.75$\pm$0.017 (0.01)
        & 27.36$\pm$0.005 (0.01) \\
        Layer-$I$
        & 40.61$\pm$0.005 (0.01)
        & 34.11$\pm$0.021 (0.01)
        & 29.39$\pm$0.014 (0.01)
        & 27.13$\pm$0.023 (0.01) \\
        QKV-$F$
        & 41.93$\pm$0.075 (0.02)
        & 35.34$\pm$0.027 (0.01)
        & 30.16$\pm$0.030 (0.02)
        & 27.74$\pm$0.017 (0.01) \\
        QKV-$C$
        & 41.71$\pm$0.121 (0.02)
        & 35.15$\pm$0.012 (0.01)
        & 30.14$\pm$0.048 (0.02)
        & 27.67$\pm$0.011 (0.01) \\
        QKV-$S$
        & 41.58$\pm$0.165 (0.02)
        & 35.10$\pm$0.053 (0.01)
        & 30.10$\pm$0.054 (0.02)
        & 27.65$\pm$0.018 (0.01) \\
        QKV-$I$
        & 40.60$\pm$0.029 (0.01)
        & 34.12$\pm$0.025 (0.01)
        & 29.38$\pm$0.014 (0.01)
        & 27.12$\pm$0.017 (0.01) \\
        \bottomrule
    \end{tabular}
    \caption{
        Complete validation-perplexity comparison on GPT-2-style models trained on 1B FineWeb tokens. Each entry reports mean $\pm$ standard deviation over three seeds with the best learning rate in parentheses. The best mean in each column is bold.}
    \label{tab:gpt-ppl-full}
\end{table*}

\Cref{tab:llama-ppl-full} reports the corresponding comparison on the LLaMA-style models.

\begin{table*}[t]
    \centering
    \footnotesize
    \setlength{\tabcolsep}{7pt}
    \begin{tabular}{@{}lccc@{}}
        \toprule
        \textbf{Optimizer}
        & \textbf{LLaMA-60M}
        & \textbf{LLaMA-130M}
        & \textbf{LLaMA-350M} \\
        \midrule
        AdamW
        & 32.17$\pm$0.062 (0.005)
        & 28.15$\pm$0.025 (0.001)
        & 24.03$\pm$0.061 (0.001) \\
        Muon
        & 26.56$\pm$0.022 (0.04)
        & 22.36$\pm$0.012 (0.04)
        & 19.39$\pm$0.003 (0.02) \\
        \midrule
        \multicolumn{4}{@{}l}{\textit{\textsc{t-Muon} (proposed)}} \\
        Head-$F$
        & 26.69$\pm$0.003 (0.04)
        & 22.40$\pm$0.023 (0.04)
        & 19.49$\pm$0.006 (0.02) \\
        Head-$C$
        & 26.40$\pm$0.004 (0.04)
        & 22.31$\pm$0.006 (0.04)
        & 19.45$\pm$0.006 (0.02) \\
        Head-$S$
        & 26.43$\pm$0.029 (0.04)
        & 22.31$\pm$0.013 (0.04)
        & 19.44$\pm$0.004 (0.02) \\
        Head-$I$
        & 25.95$\pm$0.025 (0.04)
        & \textbf{21.92$\pm$0.003} (0.02)
        & \textbf{19.06$\pm$0.004} (0.02) \\
        Head-stack-$F$
        & 26.68$\pm$0.042 (0.04)
        & 22.39$\pm$0.012 (0.04)
        & 19.49$\pm$0.005 (0.02) \\
        Head-stack-$C$
        & 26.41$\pm$0.015 (0.04)
        & 22.32$\pm$0.011 (0.04)
        & 19.44$\pm$0.004 (0.02) \\
        Head-stack-$S$
        & 26.42$\pm$0.034 (0.04)
        & 22.29$\pm$0.003 (0.04)
        & 19.45$\pm$0.008 (0.02) \\
        Head-stack-$I$
        & \textbf{25.93$\pm$0.007} (0.04)
        & \textbf{21.92$\pm$0.009} (0.02)
        & 19.06$\pm$0.008 (0.02) \\
        Layer-$F$
        & 26.70$\pm$0.035 (0.04)
        & 22.39$\pm$0.011 (0.04)
        & 19.47$\pm$0.003 (0.02) \\
        Layer-$I$
        & 26.59$\pm$0.010 (0.04)
        & 22.34$\pm$0.005 (0.04)
        & 19.40$\pm$0.004 (0.02) \\
        QKV-$F$
        & 27.47$\pm$0.008 (0.04)
        & 23.32$\pm$0.017 (0.02)
        & 20.17$\pm$0.016 (0.02) \\
        QKV-$C$
        & 27.26$\pm$0.079 (0.04)
        & 23.19$\pm$0.026 (0.02)
        & 20.01$\pm$0.011 (0.02) \\
        QKV-$S$
        & 27.24$\pm$0.036 (0.04)
        & 23.13$\pm$0.017 (0.02)
        & 19.98$\pm$0.008 (0.02) \\
        QKV-$I$
        & 26.57$\pm$0.009 (0.04)
        & 22.36$\pm$0.017 (0.04)
        & 19.38$\pm$0.012 (0.02) \\
        \bottomrule
    \end{tabular}
    \caption{Complete validation-perplexity comparison on LLaMA-style models trained on 1B FineWeb tokens. Each entry reports mean $\pm$ standard deviation over three seeds with the best learning rate in parentheses. The best mean in each column is bold.}
    \label{tab:llama-ppl-full}
\end{table*}

\Cref{tab:long-budget-full} reports the full comparison under larger token budgets.

\begin{table*}[t]
    \centering
    \footnotesize
    \setlength{\tabcolsep}{7pt}
    \begin{tabular}{lccc}
        \toprule
        \textbf{Optimizer}
        & \textbf{GPT-Small, 3B}
        & \textbf{GPT-Base, 7B}
        & \textbf{LLaMA-130M, 2.2B} \\
        \midrule
        AdamW
        & 28.97$\pm$0.018 (0.01)
        & 20.96$\pm$0.135 (0.005)
        & 21.51$\pm$0.033 (0.005) \\
        Muon
        & 27.94$\pm$0.046 (0.02)
        & 20.47$\pm$0.128 (0.01)
        & 19.35$\pm$0.013 (0.02) \\
        \midrule
        \multicolumn{4}{l}{\textit{\textsc{t-Muon} (proposed)}} \\
        Head-$F$
        & 27.62$\pm$0.021 (0.02)
        & 20.12$\pm$0.110 (0.01)
        & 19.58$\pm$0.006 (0.04) \\
        Head-$I$
        & 27.09$\pm$0.014 (0.02)
        & 19.87$\pm$0.112 (0.01)
        & \textbf{19.00$\pm$0.003} (0.02) \\
        Head-stack-$F$
        & 28.33$\pm$0.050 (0.01)
        & 20.51$\pm$0.093 (0.01)
        & 19.59$\pm$0.002 (0.04) \\
        Head-stack-$I$
        & 27.08$\pm$0.013 (0.02)
        & 19.87$\pm$0.098 (0.01)
        & 19.01$\pm$0.003 (0.02) \\
        Layer-$F$
        & 27.33$\pm$0.012 (0.01)
        & 19.97$\pm$0.068 (0.01)
        & 19.58$\pm$0.008 (0.02) \\
        Layer-$I$
        & 27.04$\pm$0.016 (0.02)
        & \textbf{19.85$\pm$0.085} (0.01)
        & 19.35$\pm$0.001 (0.02) \\
        QKV-$F$
        & 27.74$\pm$0.012 (0.02)
        & 20.30$\pm$0.076 (0.01)
        & 19.91$\pm$0.009 (0.02) \\
        QKV-$I$
        & \textbf{27.03$\pm$0.011} (0.02)
        & 19.85$\pm$0.084 (0.01)
        & 19.34$\pm$0.006 (0.02) \\
        \bottomrule
    \end{tabular}
    \caption{ Complete validation-perplexity comparison under larger token budgets. Each entry reports mean $\pm$ standard deviation over three seeds with the best learning rate in parentheses. The best mean in each column is bold.}
    \label{tab:long-budget-full}
\end{table*}

\Cref{tab:long-context-full} reports the full single-seed comparison for the longer-context experiments.

\begin{table*}[t]
    \centering
    \footnotesize
    \setlength{\tabcolsep}{7pt}
    \begin{tabular}{lccc}
        \toprule
        \textbf{Optimizer}
        & \textbf{GPT-Small}
        & \textbf{GPT-Base}
        & \textbf{LLaMA-350M} \\
        \midrule
        Context length
        & 2048
        & 4096
        & 4096 \\
        \midrule
        AdamW
        & 65.99 (0.001)
        & 221.50 (0.0005)
        & 70.25 (0.001) \\
        Muon
        & 41.87 (0.01)
        & 55.90 (0.04)
        & 28.71 (0.02) \\
        \midrule
        \multicolumn{4}{l}{\textit{\textsc{t-Muon} (proposed)}} \\
        Head-$F$
        & 39.82 (0.04)
        & 55.45 (0.005)
        & 29.42 (0.02) \\
        Head-$I$
        & \textbf{36.91} (0.04)
        & 51.84 (0.01)
        & 27.35 (0.02) \\
        Head-stack-$F$
        & 41.68 (0.02)
        & 60.74 (0.005)
        & 29.57 (0.02) \\
        Head-stack-$I$
        & 37.14 (0.04)
        & 51.94 (0.01)
        & \textbf{27.31} (0.02) \\
        Layer-$F$
        & 37.96 (0.04)
        & \textbf{49.77} (0.02)
        & 28.61 (0.02) \\
        Layer-$I$
        & 37.91 (0.04)
        & 49.97 (0.02)
        & 28.55 (0.02) \\
        QKV-$F$
        & 41.14 (0.02)
        & 55.63 (0.02)
        & 39.82 (0.02) \\
        QKV-$I$
        & 38.15 (0.02)
        & 50.04 (0.02)
        & 28.63 (0.02) \\
        \bottomrule
    \end{tabular}
    \caption{ Complete single-seed validation-perplexity comparison for the longer-context experiments. Results use seed 42. The best learning rate is shown in parentheses. The best value in each column is bold.}
    \label{tab:long-context-full}
\end{table*}

\Cref{tab:edu-full} extends \Cref{tab:edu-main} to all evaluated optimizers, reporting both the final validation loss and perplexity.

\begin{table*}[t]
    \centering
    \footnotesize
    \setlength{\tabcolsep}{5pt}
    \begin{tabular}{@{}lcccc@{}}
        \toprule
        \textbf{Optimizer}
        & \textbf{LLaMA-60M, 1.2B}
        & \textbf{LLaMA-130M, 2.7B}
        & \textbf{LLaMA-350M, 7.4B}
        & \textbf{LLaMA-1B, 13.1B} \\
        \midrule
        AdamW
        & 3.1737 / 23.89 (0.005)
        & 2.8086 / 16.59 (0.005)
        & 2.4782 / 11.92 (0.005)
        & -- \\
        Muon
        & 3.0241 / 20.58 (0.04)
        & 2.7277 / 15.30 (0.02)
        & 2.4509 / 11.60 (0.01)
        & 2.2457 / 9.45 (0.01) \\
        \midrule
        \multicolumn{5}{@{}l}{\textit{\textsc{t-Muon} (proposed)}} \\
        Head-$I$
        & \textbf{3.0071 / 20.23} (0.04)
        & \textbf{2.7140 / 15.09} (0.02)
        & \textbf{2.4349 / 11.41} (0.01)
        & \textbf{2.2339 / 9.34} (0.01) \\
        Head-$F$
        & 3.0293 / 20.68 (0.04)
        & 2.7426 / 15.53 (0.04)
        & --
        & -- \\
        Head-$C$
        & 3.0282 / 20.66 (0.04)
        & 2.7381 / 15.46 (0.02)
        & --
        & -- \\
        Head-$S$
        & 3.0264 / 20.62 (0.04)
        & 2.7376 / 15.45 (0.04)
        & --
        & -- \\
        Head-stack-$F$
        & 3.0318 / 20.73 (0.04)
        & 2.7415 / 15.51 (0.04)
        & --
        & -- \\
        Head-stack-$C$
        & 3.0273 / 20.64 (0.04)
        & 2.7375 / 15.45 (0.04)
        & --
        & -- \\
        Head-stack-$S$
        & 3.0279 / 20.65 (0.04)
        & 2.7370 / 15.44 (0.04)
        & 2.4589 / 11.69 (0.02)
        & -- \\
        Layer-$F$
        & 3.0320 / 20.74 (0.04)
        & 2.7370 / 15.44 (0.02)
        & 2.4588 / 11.69 (0.01)
        & -- \\
        QKV-$F$
        & 3.0437 / 20.98 (0.04)
        & 2.7484 / 15.62 (0.02)
        & --
        & -- \\
        QKV-$C$
        & 3.0430 / 20.97 (0.04)
        & 2.7471 / 15.60 (0.02)
        & --
        & -- \\
        QKV-$S$
        & 3.0395 / 20.89 (0.04)
        & 2.7489 / 15.63 (0.02)
        & --
        & -- \\
        \bottomrule
    \end{tabular}
    \caption{
        Complete FineWeb-Edu comparison. Each entry reports the final validation loss / perplexity of a single seed, with the selected learning rate in parentheses; the best result in each column is bold.
    }
    \label{tab:edu-full}
\end{table*}

\subsection{Learning-Rate Sensitivity}
\label{app:learning-rate-sensitivity}

\Cref{tab:lr-grids} lists the learning-rate grid swept for each optimizer. For every optimizer--model pair, we select the rate with the lowest mean final validation perplexity over the three seeds; the selected values are shown in parentheses in \Cref{tab:gpt-ppl-full,tab:llama-ppl-full,tab:long-budget-full}. \Cref{fig:lr-sweep} plots the corresponding sweeps for 1B FineWeb experiments. In the main comparisons, every identity-transform \textsc{t-Muon} variant selects a rate in the interior of its grid.

\begin{table*}[t]
    \centering
    \small
    \setlength{\tabcolsep}{6pt}
    \begin{tabular}{@{}llc@{}}
        \toprule
        \textbf{Optimizer}
        & \textbf{Models}
        & \textbf{Learning-rate grid} \\
        \midrule
        AdamW
        & All
        & $\{5{\times}10^{-4},\,10^{-3},\,5{\times}10^{-3},\,10^{-2}\}$ \\
        Muon and \textsc{t-Muon}
        & GPT-2-style
        & $\{5{\times}10^{-3},\,10^{-2},\,2{\times}10^{-2},\,4{\times}10^{-2}\}$ \\
        Muon and \textsc{t-Muon}
        & LLaMA-style
        & $\{5{\times}10^{-3},\,10^{-2},\,2{\times}10^{-2},\,4{\times}10^{-2},\,6{\times}10^{-2},\,8{\times}10^{-2}\}$ \\
        \bottomrule
    \end{tabular}
    \caption{
        Learning-rate grids swept for each optimizer.
    }
    \label{tab:lr-grids}
\end{table*}

\begin{figure*}[t]
    \centering
    \begin{subfigure}[t]{0.48\textwidth}
        \centering
        \includegraphics[width=\linewidth]{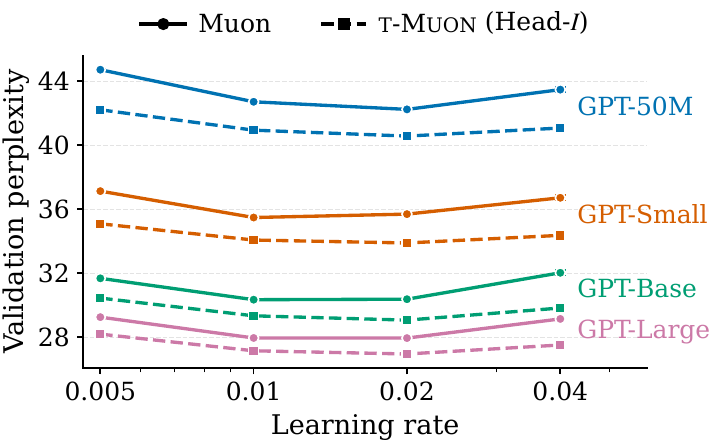}
        \caption{GPT-2-style models}
        \label{fig:lr-sweep-gpt}
    \end{subfigure}
    \hfill
    \begin{subfigure}[t]{0.48\textwidth}
        \centering
        \includegraphics[width=\linewidth]{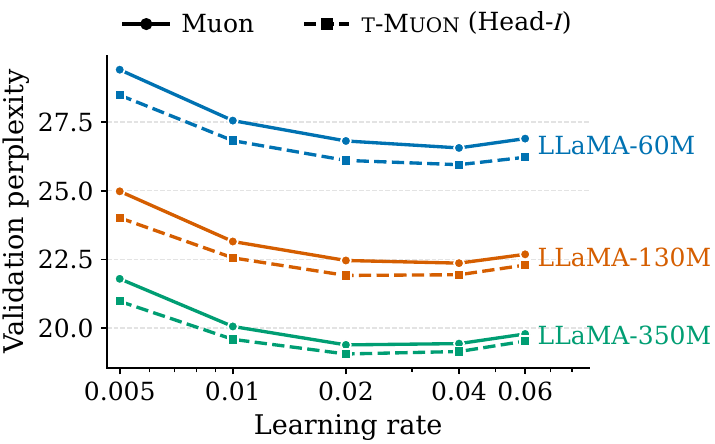}
        \caption{LLaMA-style models}
        \label{fig:lr-sweep-llama}
    \end{subfigure}
    \caption{
        Final validation perplexity as a function of the learning rate over the grids in \Cref{tab:lr-grids}.
    }
    \label{fig:lr-sweep}
\end{figure*}

\begin{figure*}[t]
    \centering
    \begin{subfigure}[t]{0.48\textwidth}
        \centering
        \includegraphics[width=\linewidth]
        {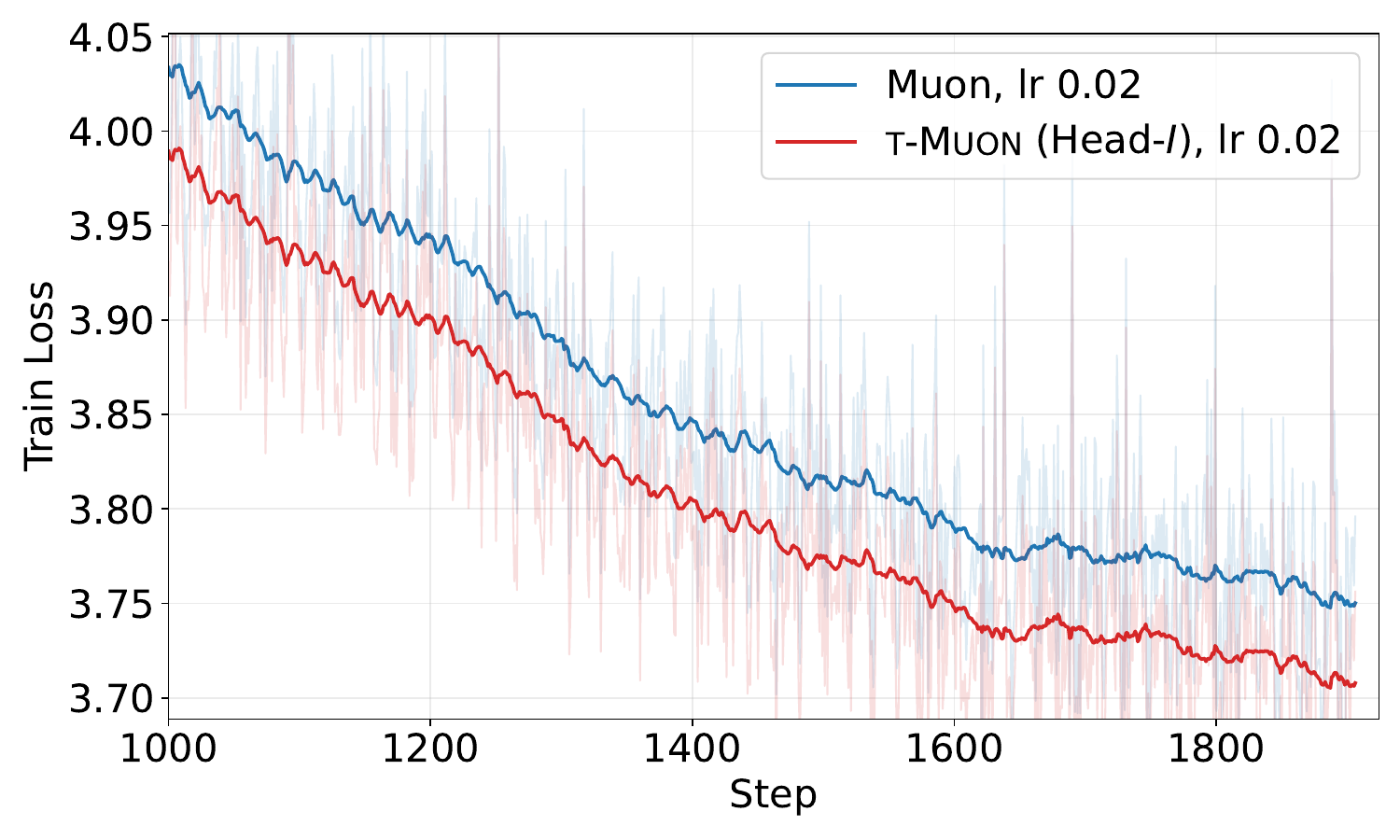}
        \caption{GPT-2-50M}
        \label{fig:gpt-50m-train}
    \end{subfigure}
    \hfill
    \begin{subfigure}[t]{0.48\textwidth}
        \centering
        \includegraphics[width=\linewidth]
        {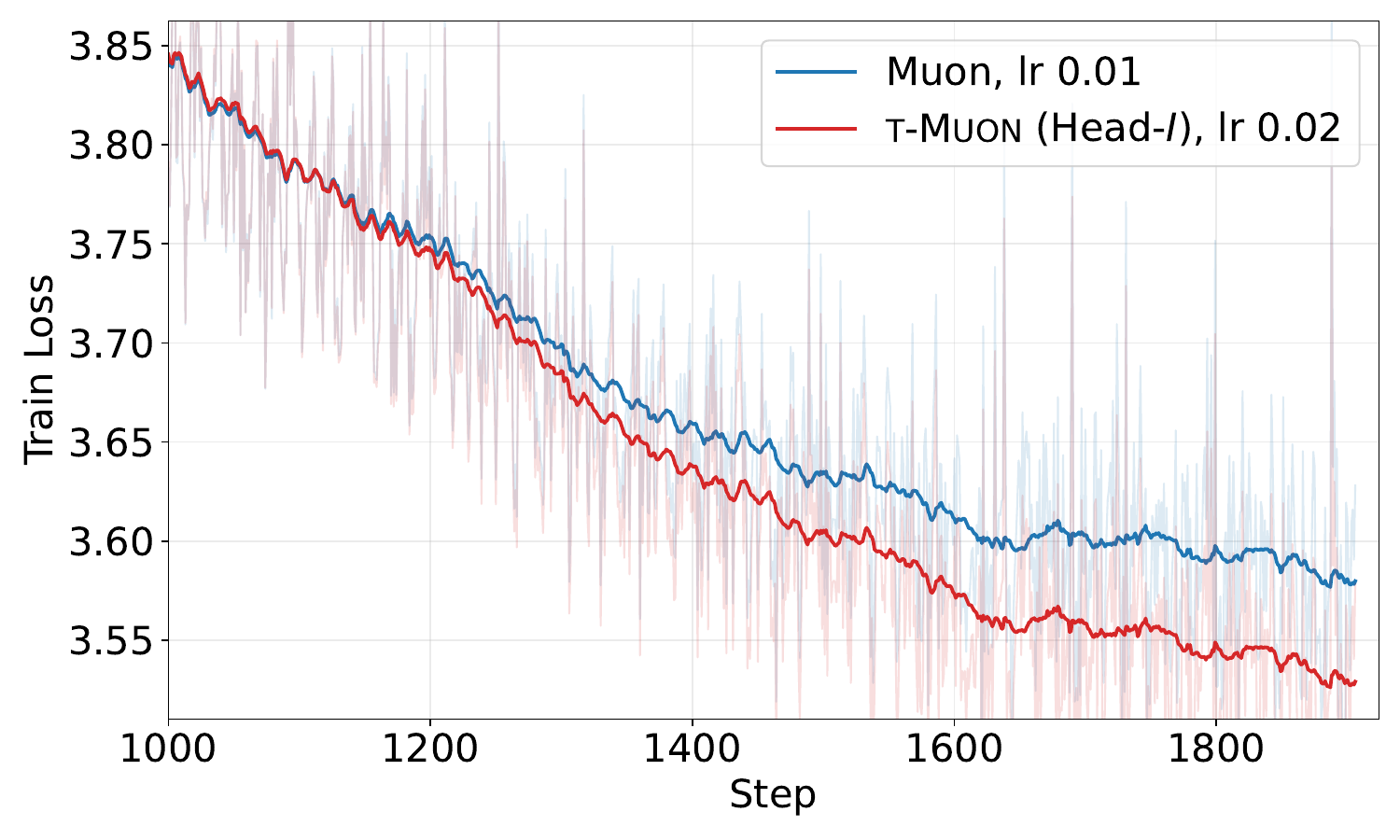}
        \caption{GPT-2-Small}
        \label{fig:gpt-small-train}
    \end{subfigure}

    \medskip

    \begin{subfigure}[t]{0.48\textwidth}
        \centering
        \includegraphics[width=\linewidth]
        {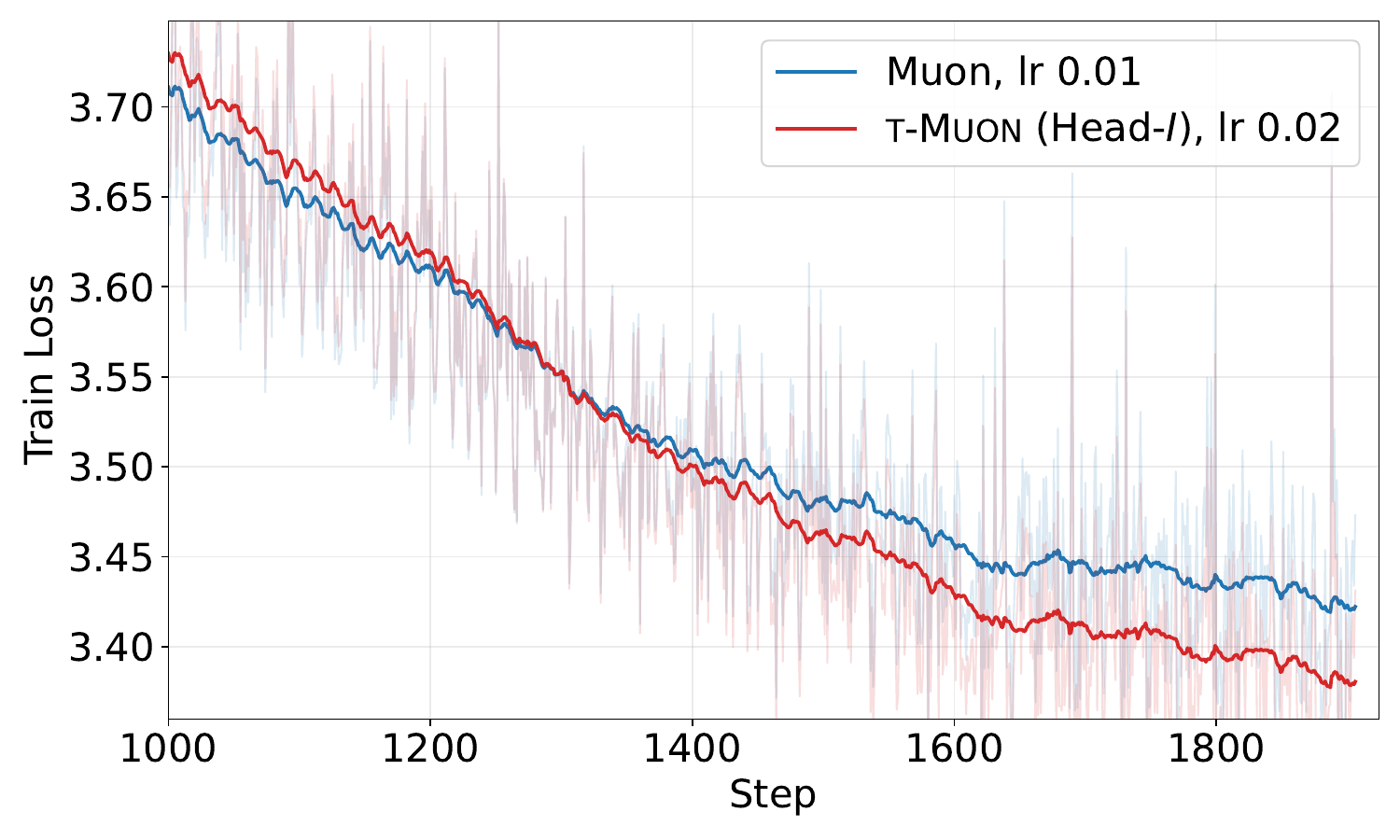}
        \caption{GPT-2-Base}
        \label{fig:gpt-base-train}
    \end{subfigure}
    \hfill
    \begin{subfigure}[t]{0.48\textwidth}
        \centering
        \includegraphics[width=\linewidth]
        {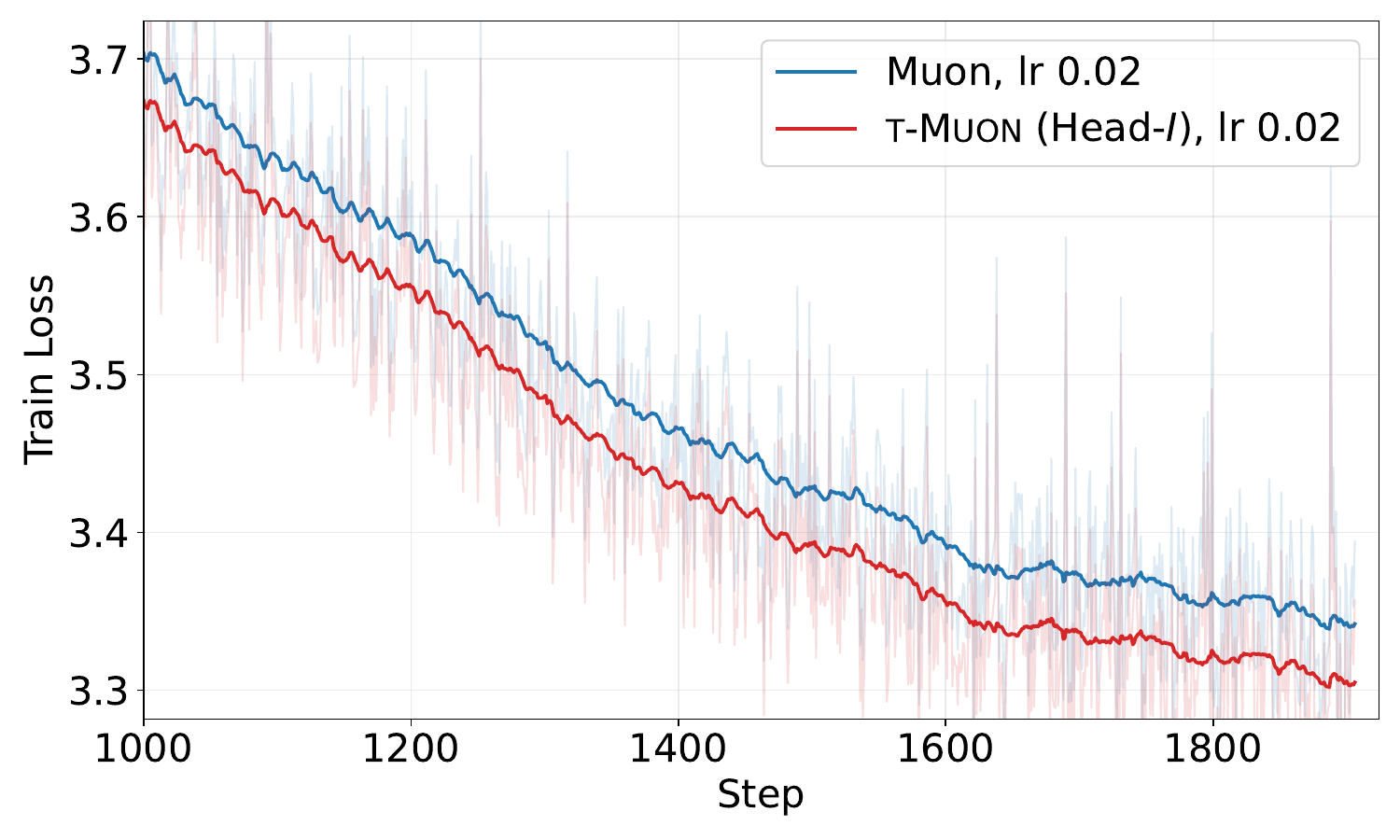}
        \caption{GPT-2-Large}
        \label{fig:gpt-large-train}
    \end{subfigure}

    \caption{
        Training-loss curves for GPT-2-style models trained on one billion FineWeb tokens.
    }
    \label{fig:gpt-train}
\end{figure*}

\begin{figure*}[p]
    \centering

    \begin{subfigure}[b]{0.65\textwidth}
        \centering
        \includegraphics[width=\linewidth]{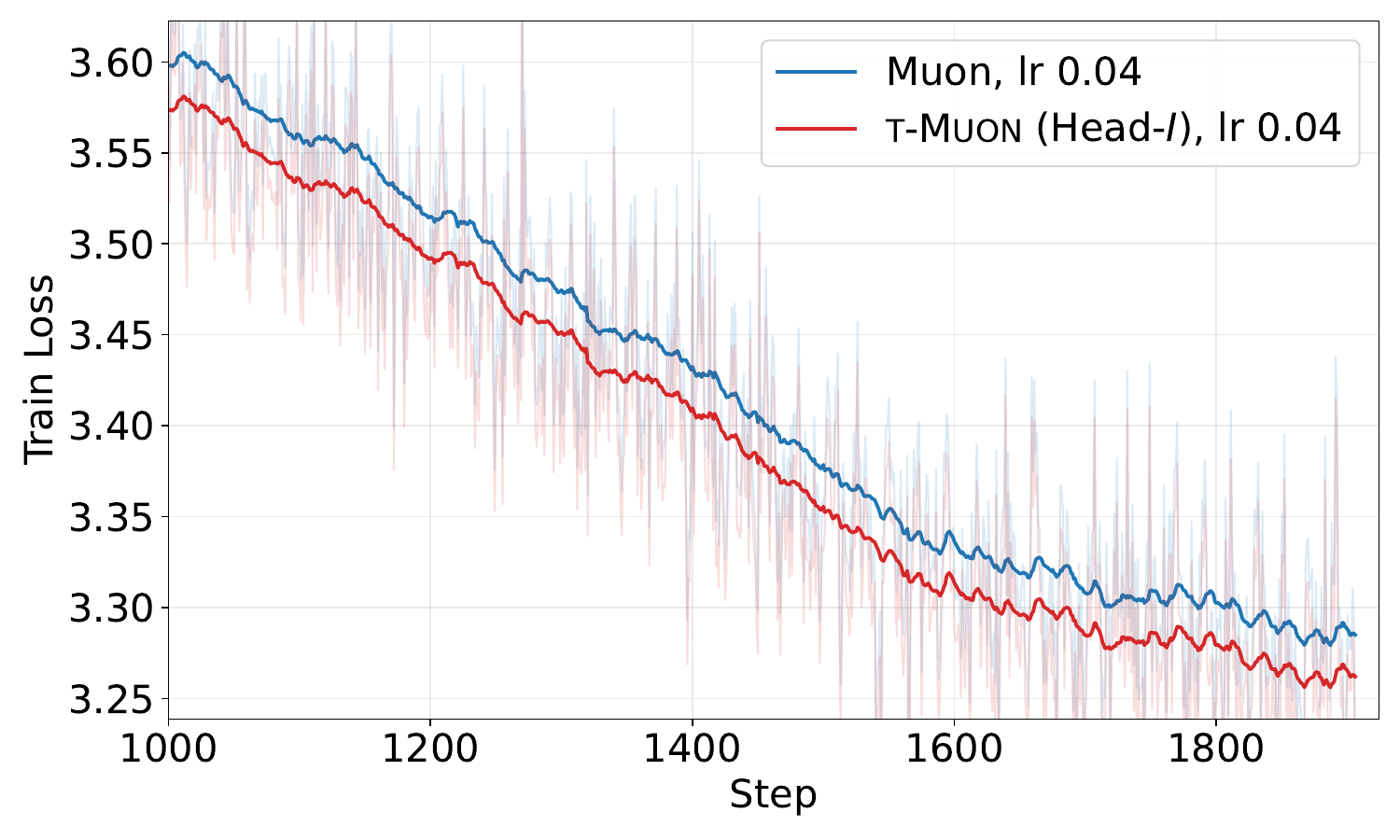}
        \caption{LLaMA-60M}
        \label{fig:llama-60m-train}
    \end{subfigure}

    \par\medskip

    \begin{subfigure}[b]{0.65\textwidth}
        \centering
        \includegraphics[width=\linewidth]{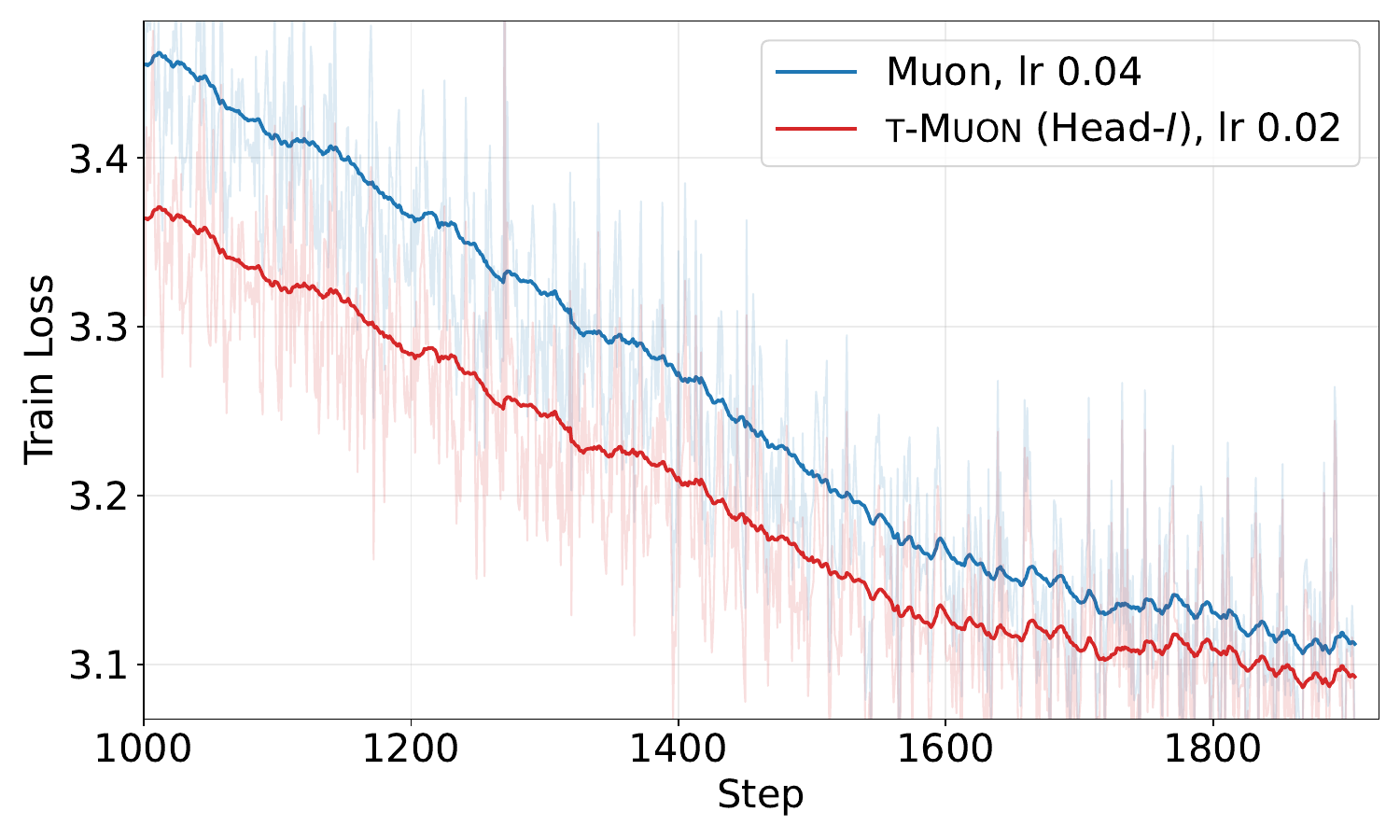}
        \caption{LLaMA-130M}
        \label{fig:llama-130m-train}
    \end{subfigure}

    \par\medskip

    \begin{subfigure}[b]{0.65\textwidth}
        \centering
        \includegraphics[width=\linewidth]{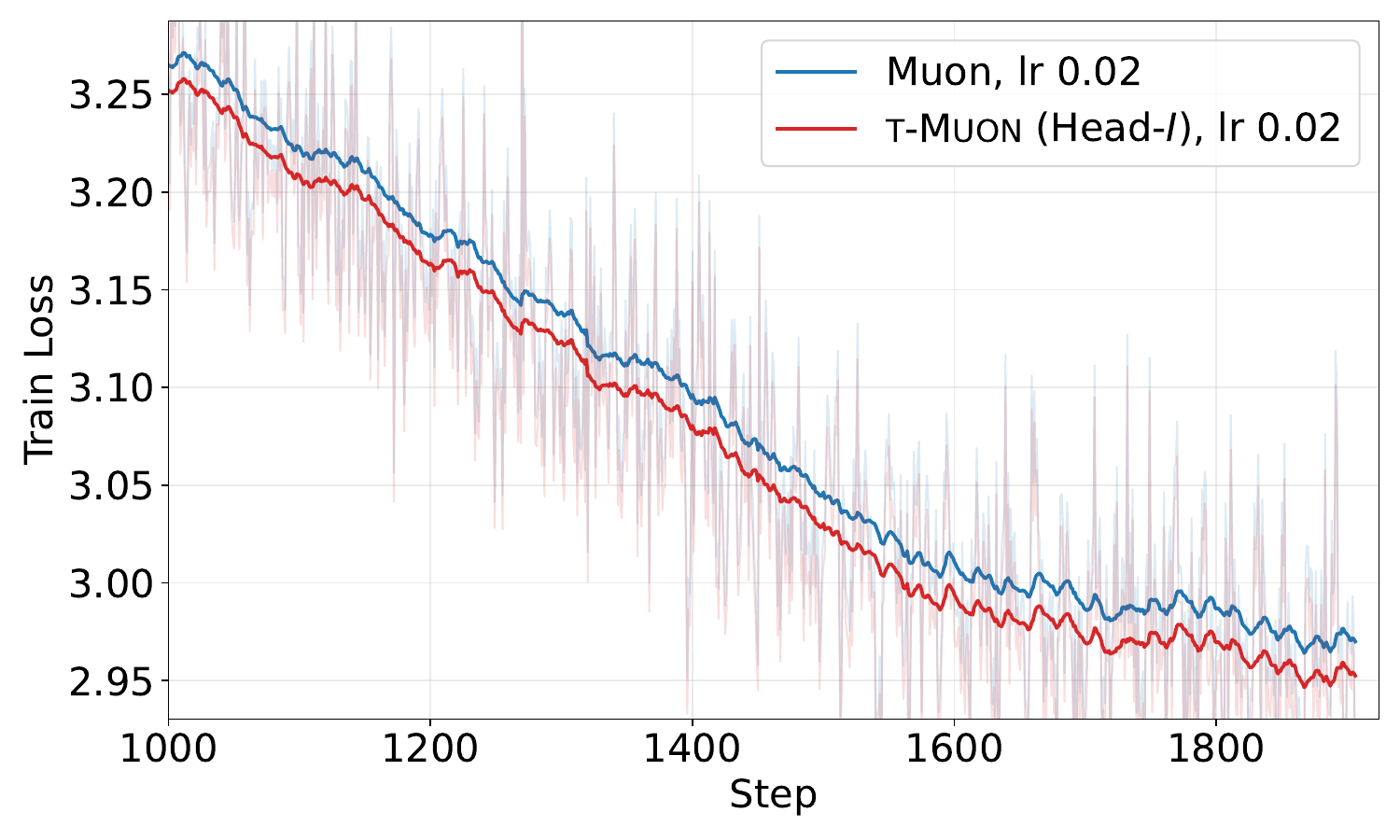}
        \caption{LLaMA-350M}
        \label{fig:llama-350m-train}
    \end{subfigure}

    \caption{Training-loss curves for LLaMA-style models trained
    on one billion FineWeb tokens.}
    \label{fig:llama-train}
\end{figure*}




\end{document}